\DeclareMathOperator{\KL}{\mathsf{KL}}
\DeclareMathOperator{\TV}{\mathsf{TV}}
\DeclareMathOperator{\Id}{\mathrm{Id}}
\DeclareMathOperator{\Vol}{\mathrm{Vol}}
\DeclareMathOperator{\dist}{\operatorname{dist}}
\DeclareMathOperator{\diam}{\operatorname{diam}}
\def\norm#1{\|#1\|}
\def\curly#1{\{#1\}}
\def\square#1{\left[#1\right]}
\newcommand{\expectation}{\mathbb{E}}
\newcommand{\cN}{\mathcal{N}}
\newcommand*\samethanks[1][\value{footnote}]{\footnotemark[#1]}
\title{Linear Convergence of Diffusion Models Under the Manifold
Hypothesis}
\author{Peter Potaptchik\thanks{Equal contribution}\,}
\author{Iskander Azangulov\samethanks \,}
\author{George Deligiannidis}
\affil{University of Oxford\\  \{surname\}@stats.ox.ac.uk}
\date{}
\begin{document}

\maketitle

\begin{abstract}
Score-matching generative models have proven successful at sampling from complex high-dimensional data distributions. In many applications, this distribution is believed to concentrate on a much lower $d$-dimensional manifold embedded into $D$-dimensional space; this is known as the manifold hypothesis. The current best-known convergence guarantees are either linear in $D$ or polynomial (superlinear) in $d$. The latter exploits a novel integration scheme for the backward SDE. We take the best of both worlds and show that the number of steps diffusion models require in order to converge in Kullback-Leibler~(KL) divergence is linear (up to logarithmic terms) in the intrinsic dimension $d$. Moreover, we show that this linear dependency is sharp. 
\end{abstract}

\section{Introduction}
Score-matching generative models~\cite{ho2020denoising,song2021scorebased} such as diffusion models have become a leading paradigm for generative modeling. They achieve state-of-the-art results in many domains including audio/image/video synthesis~\cite{evans2024fasttimingconditionedlatentaudio,dhariwal2021diffusionmodelsbeatgans, ho2022videodiffusionmodels}, molecular modeling~\cite{watson2023molecule}, and recently text generation~\cite{pmlr-v235-lou24a}. Informally, diffusion models take samples from a distribution in $\R^D$, gradually corrupt them with Gaussian noise, and then learn to reverse this process. Once trained, a diffusion model can turn noise into new samples from the data distribution by iteratively applying the denoising procedure.

Due to the empirical success of diffusion models, there has been a push~\cite{oko2023,wibisono2024optimalscoreestimationempirical, wu2024theoreticalinsightsdiffusionguidance} to better understand their theoretical properties, in particular, their convergence guarantees. 

An important question is to determine the \textit{iteration complexity} of diffusion models. In this paper, this refers to the number of steps diffusion models require in order to converge in Kullback-Leibler divergence to the original distribution regularized by a small amount of Gaussian noise. 
Assuming only the existence of a second moment, \cite{benton2024nearly} prove that the iteration complexity is at most linear (up to logarithmic factors) in $D$. 

While this result is tight in the general case, many real-world distributions actually have a low-dimensional structure. The assumption that a distribution lives on a $d$-dimensional manifold is called the \textit{manifold hypothesis}, see e.g. \cite{bengio2013representation}. This hypothesis has been supported by empirical evidence in many settings, e.g. image data, in which diffusion models are particularly successful. Therefore, the study of diffusion models under this assumption has garnered increased interest~\cite{kadkhodaie2024generalizationdiffusionmodelsarises, pmlr-v238-tang24a,pidstrigach2022}. 

Recently~\cite{li2024} have shown that there is a special discretization design guaranteeing an iteration complexity of $d^4$ in the intrinsic dimension~$d$. The current~\cite{azangulov2024convergencediffusionmodelsmanifold} best-known bound scales as $d^3$.

\subsection*{Our Contribution} In this work, we improve upon these results and show that the number of steps diffusion models require to converge in
KL divergence is \textit{linear} (up to logarithmic terms) in the \textit{intrinsic} dimension $d$. This is formalized in \ref{thm:main}. Additionally, we prove that the linear dependency is sharp. 

The proof follows the structure of \cite{chen2023samplingeasylearningscore} and \cite{benton2024nearly} combined with a result from~\cite{azangulov2024convergencediffusionmodelsmanifold} providing bounds on the score function depending only on the intrinsic dimension $d$. A key insight of our proof exploits the inherent martingale structure in diffusion processes. As we show, with the right SDE discretization, the corresponding error can be represented as a sum of easy-to-control martingale increments leading to a very concise argument. 

We posit that this scaling is one of the major reasons why diffusion models are able to perform so well on tasks such as synthetic image generation. While the extrinsic dimension of image datasets is very large, e.g. $\approx 1.5\cdot 10^5$ for ImageNet, ~\cite{pope2021intrinsicdimensionimagesimpact} estimate that the true intrinsic dimension is much lower, e.g. around $50$ for ImageNet. Our result implies that the number of steps diffusion models need to sample scales as the latter rather than the former. This helps explain why diffusion models are able to generate crisp image samples with less than $1000$ iterations~\cite{ho2020denoising}. We note that this is not the discretization generally used in practice; however, a single-line modification to existing implementations would result in this discretization scheme.

In Section~\ref{section:preliminaries}, we give an overview of diffusion models, our assumptions and the discretization scheme that we use. We introduce our main result in Section~\ref{section:main_results} and give its proof in Section~\ref{section:proof}. We conclude in Section~\ref{section:conclusion}. 

\paragraph{Concurrent work:} Two weeks after the first version of our work was made publicly available, an independent work appeared by ~\cite{huang2024denoisingdiffusionprobabilisticmodels} who derive similar bounds. In addition, they relax some of our assumption on the support and boundedness of the measure.

\section{Preliminaries} \label{section:preliminaries}
\subsection{Diffusion Models}
Suppose we want to generate samples from a distribution $\mu$ on $\R^D$. Diffusion models solve this by first specifying a forward noising process $\curly{X_t}_{t \in [0,T]}$ up to some time $T$. This process is defined as the evolution of data $X_0 \sim \mu$ according to an Ornstein-Uhlenbeck~(OU)~SDE
\begin{equation*}
    \begin{cases}    
    dX_t = -X_t dt + \sqrt{2}dB_t,& t\in (0,T]
    \\
    X_0 \sim \mu,
    \end{cases}
\end{equation*}
where $\curly{B_t}_{t \in [0,T]}$ is a Brownian motion on $\R^D$. Letting $c_t := \exp(- t)$ and $\sigma^2_t := 1-\exp(-2t)$, we note that $X_t \stackrel{dist.}{=} c_t X_0 + \sigma_tZ_D$ where $Z_D \sim \mathcal{N}(0, \Id_D)$. We use $p_t$ to denote the marginal density of $X_t$.

    The reverse process $\curly{Y_t}_{t \in [0,T]} := \curly{X_{T-t}}_{t \in [0,T]}$, under mild assumptions~\cite{ANDERSON1982313}, satisfies
\begin{equation}
\label{eqn:reverse_sde}
\begin{cases}
    dY_t = (Y_t + 2 \nabla \log p_{T-t}(Y_t))dt + \sqrt{2}dB_t', & t\in (0,T]
    \\
    Y_0 \sim p_T,
\end{cases}
\end{equation}
where $\curly{B_t'}_{t \in [0,T]}$ is another Brownian motion on $\R^D$. By generating samples $Y_0 \sim p_T$ and then simulating~\eqref{eqn:reverse_sde} up to time $T$, we can obtain samples $Y_T \sim \mu$ from the data distribution. The main idea behind diffusion models is to simulate these dynamics approximately since neither $p_T$ nor the score function $s_t := \nabla \log p_t$ are known. 

In practice, we solve these problems by building an approximate process $\hat{Y}_t$. Due to the exponential convergence of the OU process to a standard normal distribution, for a sufficiently large $T$, we have that $p_T \approx \mathcal{N}(0, \Id_D)$. So, we initialize as $\hat{Y}_0 \sim \mathcal{N}(0, \Id_D)$. Second, we learn a score approximation $\hat{s}_t \approx s_t$ which is used instead of the true score function. The approximation $\hat{s}_t$ is usually parameterized by neural networks and trained via a denoising score-matching objective. 

Finally, we note that the SDE~\eqref{eqn:reverse_sde} cannot be simulated exactly and instead a discretization scheme must be introduced. More precisely, first, in order to ensure numerical stability, an early stopping time $\delta>0$ is chosen. Next, the interval $[0,T-\delta]$ is divided into $K$ time steps $0=t_0 < t_1 < \ldots < t_K = T-\delta$. The final discretization is given by
\begin{align} \label{eqn:discretized_sde}
    \begin{cases}    
    \hat{Y}_{t_{k+1}} = \alpha_{k}\hat Y_{t_{k}} + \beta_{k}\hat{s}_{T-t_k}(\hat Y_{t_{k}}) + \eta_{k}Z_k, & 0\le k < K
    \\
    \hat{Y}_{t_0} \sim \cN\left(0,\Id_D\right),
    \end{cases}
\end{align} 
where $Z_k \stackrel{i.i.d.}{\sim} \mathcal{N}(0, \Id_D)$ and $\alpha_k, \beta_k, \eta_k$ are real numbers. The discretization schedule $t_0 < t_1 < \ldots < t_K$ and coefficients $\alpha_k,\beta_k,\eta_k$ are hyper-parameters specifying the design of the sampling procedure. We discuss the choice of the discretization coefficients and their importance in Section~\ref{sub:discretization_scheme}, and we specify the discretization schedule $\{t_k\}$ that gives a linear in $d$ convergence rate in Section~\ref{section:main_results}.

\begin{figure}[ht]
    \centering
    \subfigure[Hilbert curves of orders $n= 2,4,6$]{
        \includegraphics[width=0.45\textwidth]{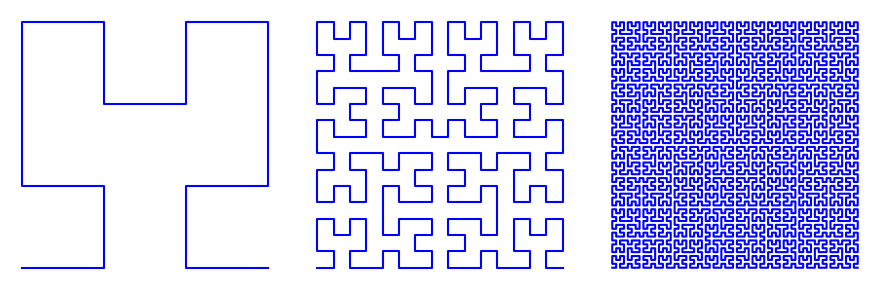}
        \label{fig:first_fig}
    }
    \subfigure[Premature early stopping leads to over-smoothing of the manifold structure]{
        \includegraphics[width=0.34\textwidth, height=0.15\textwidth]{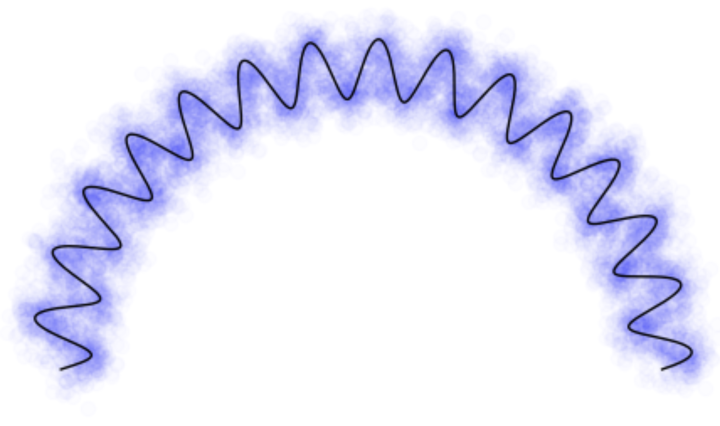}
        \label{fig:second_fig}
    }
\end{figure}

\subsection{Assumptions and Notation}
Throughout the paper, we assume that the distribution $\mu$ satisfies the manifold hypothesis which we state more formally in the following.
\begin{assumption} \label{asmp:1}
    $\mu$ is supported on a smooth, compact, $d$-dimensional, $\beta\ge 2$-smooth manifold $M$ embedded into $\R^D$.
\end{assumption}
\noindent We also make the following assumption for ease of presentation. The general case is handled by rescaling and shifting.
\begin{assumption}
\label{asmp:2}
    We assume $\diam M \le 1$ and $0\in  M$. 
\end{assumption}
Without any restrictions on the manifold $M$, we cannot hope to obtain bounds independent of the ambient dimension $D$. As an intuitive counter-example, consider $D$-dimensional Hilbert curves $M_n$, see Figure~\ref{fig:first_fig}. In the limit, these curves cover the entire $D$ dimensional cube $M_\infty := [0,1]^D$. Moreover, any measure on $M_{\infty}$ can be seen as a weak limit of measures on $M_n$. This makes sampling from $M_n$ (for large enough $n$) as hard as sampling from $M_{\infty}$, which according to~\cite[Appendix~H]{benton2024nearly} scales as at least $D$.

Therefore, we should place additional assumptions on the complexity of $\mu$ and $M$ in order to avoid such pathological cases. 
Informally (see details in Appendix~\ref{apdx:0}), the complexity of $M$ depends both on its global~(volume) and local~(smoothness) properties. We control its smoothness by introducing a scale $r > 0$ at which $M$ is locally flat.
To control the measure $\mu$, we assume that it has a density $p_0(dy)$ (w.r.t. the standard volume form $dy$) bounded from above and below. We assume logarithmic control over the discussed quantities.
\begin{assumption} \label{asmp:3}
    There is a constant $C > 0$ such that $\Vol M \le e^{dC}$, $r > e^{-C}$, and $e^{-dC} \le p_0 \le e^{dC}$.
\end{assumption}
Note that we impose different assumptions on $r$ and $\Vol M$, this follows from a relation between the radius and volume of the $d$-dimensional sphere, i.e. ${\Vol B_d(0,r) \propto r^d}$.
\begin{remark}
    Informally, on scales larger than $r$, the manifold $M$ is not flat anymore. So, if $M$ is corrupted by noise of magnitude greater than $r$, over-smoothing may destroy the geometric structure, see Figure~\ref{fig:second_fig}. 
    Diffusion models stopped at time $\delta$ add Gaussian noise proportional to $\sqrt{\delta}$. 
    To capture the shape of $M$ 
    the stopping time should be chosen to satisfy $\sqrt{\delta} \lesssim r$. Thus we should expect $\log \delta^{-1} \ge C$.     
\end{remark}

Recall that $\delta$ denotes the early stopping time, and $0 = t_0 < t_1 < \ldots < t_K = T-\delta$ are the $K$ discretization time steps. Let $\gamma_k := t_{k+1} - t_{k}$ be the $k$-th step size. We control the score estimation error of $\hat{s}_t$ as follows.
\begin{assumption} \label{asmp:score}
    The score network $\hat{s}_t(x)$ satisfies
\begin{align*}    
    \sum_{k = 0}^{K-1}\gamma_k \E\|s_{T-t_k}(X_{T-t_k}) - \hat{s}_{T-t_k}(X_{T-t_k})\|^2 
    \leq \varepsilon_{score}^2.
\end{align*}
\end{assumption}

\section{Discretization Scheme} \label{sub:discretization_scheme}
    In this section, we describe a discretization scheme that allows for polynomial convergence in the intrinsic dimension $d$. 
    The most popular discretization scheme~\eqref{eqn:discretized_sde} is the so called \textit{exponential integrator}. This involves a naive approximation of~\eqref{eqn:reverse_sde} by 
    \begin{equation}
    \label{eqn:reverse_sde_approx}
        \begin{cases}
            d\hat{Y}_t = (\hat{Y}_t + 2 \nabla \hat{s}_{T-{t_k}}(\hat{Y}_{t_k}))dt + \sqrt{2}dB_t', & t\in [t_k,t_{k+1})
            \\
            \hat{Y}_0 \sim p_T,
        \end{cases}
    \end{equation}
    where the score part of the drift is made piecewise-constant. The discretization coefficients $\alpha_k, \beta_k, \eta_k$ are then chosen so that~\eqref{eqn:discretized_sde} corresponds to solving~\eqref{eqn:reverse_sde_approx} exactly. 
    
    However, as pointed out in \cite{li2024}, the iteration complexity of the exponential integrator grows with $D$, and, actually, there is a unique setting of the discretization coefficients in~\eqref{eqn:discretized_sde} that yields an iteration complexity independent of $D$. It is given by
    \begin{equation} 
    \label{eq:discrete_version_of_final_SDE}
        \begin{cases}    
            \hat{Y}_{t_{k+1}} = c^{-1}_{\gamma_k}\hat{Y}_{t_k} 
            + \frac{\sigma^2_{\gamma_k}}{c_{\gamma_k}} \hat{s}_{T-t_k}(\hat{Y}_{t_k})
            + \sigma_{\gamma_k}\frac{\sigma_{T-t_{k+1}}}{\sigma_{T-t_k}}Z_k, 
            & Z_k \stackrel{i.i.d.}{\sim} \mathcal{N}(0, \Id_D) 
            \\
            \hat{Y}_{t_0} \sim \cN\left(0,\Id_D\right).
        \end{cases}
    \end{equation} 
    \begin{remark}
        This discretization scheme corresponds to the DDPM update rule~\cite{ho2020denoising}.
    \end{remark}
    In Appendix~\ref{apdx:discretization}, we provide a simple argument that derives~\eqref{eq:discrete_version_of_final_SDE} as a correction of~\eqref{eqn:discretized_sde}. We summarize it below. We recall Tweedie's formula~\cite{robbins1956} stating that
    \begin{equation}    
    \label{eq:tweedie_formula}
    s_t(x) = \frac{c_t\expectation[X_0|X_t=x] - x}{\sigma^2_t} = -\frac{\expectation[Z_D|X_t=x]}{\sigma_t},
    \end{equation}  
    for $X_t = c_tX_0 +\sigma_t Z_D$, where $X_0\sim \mu$ and $Z\sim \cN\left(0, \Id_D\right)$ are independent. We recall \cite[Theorem 15]{azangulov2024convergencediffusionmodelsmanifold}, which gives the following.
    \begin{theorem}
    \label{thm:main_bound_on_score}
    Let $\mu$ be a measure satisfying Assumptions~\ref{asmp:1}--\ref{asmp:3}. Let $t > 0$ and let $X_t = c_tX_0 +\sigma_t Z_D$, where $X_0\sim \mu$ and $Z_D\sim \cN\left(0, \Id_D\right)$ are independent. Then with probability at least $1-\delta$
    \[
    \norm{\sigma_t s(t,X_t) + Z_D} \lesssim \sqrt{d\left(C+\max\{\log(c_t/\sigma_t), 0\}\right) + \log \delta^{-1}}.
    \] 
    \end{theorem}
    In particular, substituting~\eqref{eq:tweedie_formula} into  Theorem~\ref{thm:main_bound_on_score} and using that $\diam M \le 1$ 
    \[
    \expectation[\norm{\expectation[X_0|X_t] -X_0}^2] \lesssim d\left(C+\max\{\log(c_t/\sigma_t), 0\}\right).
    \]
    Thus, we propose to approximate the score $s_{t}(x_{t})$ for $t < t'$ by the following  
    \begin{equation}    
    \label{eqn:first_order}
    s_{t}(x_{t}|t',x_{t'}) := \frac{c_{t}\expectation[X_0|X_{t'}=x_{t'}] - x_{t}}{\sigma^2_{t}} = c^{-1}_{t'-t}\frac{\sigma^2_{t'}}{\sigma^2_{t}}s_{t'}(x_{t'}) - \frac{x_{t}-c^{-1}_{t'-t} x_{t'}}{\sigma^2_{t}}.
    \end{equation}
    By Theorem~\ref{thm:main_bound_on_score}, this approximation induces an error that only scales with the intrinsic dimension $d$.  We analogously define $\hat{s}_t(x_t|t',x_{t'})$ to be a correction of the score estimate $\hat{s}_t$ obtained by substituting $\hat{s}_{t'}(x_{t'})$ instead of $s_{t'}(x_{t'})$ into~\eqref{eqn:first_order}. 
    \begin{restatable}{proposition}{EQUIVALENCETOLIE}    
    \label{prop:equiv_to_lie}
    The scheme~\eqref{eq:discrete_version_of_final_SDE} can be obtained by solving the following continuous-time dynamics
    \begin{equation} \label{eq:approx_backward_process_discrete_modified}
        \begin{cases}
            d\hat{Y}_t = \square{\hat{Y}_t + 2\hat{s}_{T-t}(\hat{Y}_t|T-t_k,\hat{Y}_{t_k})}dt + \sqrt{2}dB_t', 
            & t\in [t_k,t_{k+1})
            \\
            \hat{Y}_0 \sim \cN\left(0, \Id_D\right).
        \end{cases}
    \end{equation}
    \end{restatable}  
    This can be proven by integrating the linear SDE \eqref{eq:approx_backward_process_discrete_modified}. See~\eqref{eqn:reason} and then a discussion in Remark~\ref{remark:benton_proof} for additional properties of this discretization scheme. 

\section{Main Results} \label{section:main_results}
    Before stating our main result let us introduce the discretization schedule, first introduced in~\cite{benton2024nearly} which we use in what follows. 
    \begin{definition} \label{definition:discretization}
        Fixing integers $L < K$, and choose $T := \kappa L +1$ and $\delta := (1+\kappa)^{L-K}$, one can take the uniform partition $t_{k} = \kappa k, k < L$ of $[0, T-1]$ and the exponential partition 
        $t_{L+m} 
        = T - (1+\kappa)^{-m}$ of $[T-1, T-\delta]$ for $0 \leq m \leq K-L$. 
    \end{definition}
    The main result of our paper is the following.
    \begin{theorem} \label{thm:main}
        Let $\mu$ be a measure satisfying assumptions~\ref{asmp:1}--\ref{asmp:3} and let $\hat{s}_t$ be a score approximation satisfying Assumption~\ref{asmp:score} with $\{t_k\}_{k\le K}$ satisfying definition~\ref{definition:discretization}. Then the process $\hat{Y}_t$ following~\eqref{eq:discrete_version_of_final_SDE} satisfies
        \begin{align} \label{eq:main_result}
            \KL(\hat{Y}_{T-\delta}\| X_{\delta}) \lesssim \varepsilon^2_{score} + De^{-2T} 
            + \kappa + d\kappa^2(K-L)\left)(\log \delta^{-1} + C\right),
        \end{align}
        where we use $\KL(X\|Y)$ to denote the $\KL$-divergence between the laws of $X$ and $Y$. 
    \end{theorem}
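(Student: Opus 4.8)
The plan is to follow the now-standard Girsanov-type argument for bounding the KL divergence between the true reverse process and its discretization, as in \cite{chen2023samplingeasylearningscore} and \cite{benton2024nearly}, but carried out with respect to the modified continuous dynamics~\eqref{eq:approx_backward_process_discrete_modified} rather than the plain exponential integrator. First I would decompose the error via the chain rule for KL into an initialization term and a discretization-plus-score term: since $\hat Y_0 \sim \cN(0,\Id_D)$ while the true process starts from $p_T$, the initialization cost is $\KL(\cN(0,\Id_D)\,\|\,p_T) \lesssim D e^{-2T}$ by the exponential ergodicity of the OU semigroup (using only a second-moment bound on $\mu$, which follows from $\diam M \le 1$). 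For the path-space term, I would apply Girsanov's theorem to the two SDEs driven by the same Brownian motion on each subinterval $[t_k,t_{k+1})$: the true reverse SDE~\eqref{eqn:reverse_sde} has drift involving $2 s_{T-t}(Y_t)$, and the approximate one~\eqref{eq:approx_backward_process_discrete_modified} has drift involving $2\hat s_{T-t}(\hat Y_t\mid T-t_k,\hat Y_{t_k})$, so the relative entropy is controlled by
\[
\KL(\hat Y_{T-\delta}\,\|\,X_\delta) \lesssim D e^{-2T} + \sum_{k=0}^{K-1}\int_{t_k}^{t_{k+1}} \E\bigl\|\, s_{T-t}(Y_t) - \hat s_{T-t}(Y_t\mid T-t_k, Y_{t_k})\,\bigr\|^2\, dt,
\]
where the expectation is under the true reverse process. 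I would then split the integrand through the triangle inequality into (i) a score estimation piece $\|s_{T-t_k}(\cdot) - \hat s_{T-t_k}(\cdot)\|^2$, which integrates to $\varepsilon_{score}^2$ by \cref{asmp:score} after noting the linear correction~\eqref{eqn:first_order} is an isometry-type reparametrization that does not inflate this term, and (ii) a discretization piece $\|s_{T-t}(Y_t) - s_{T-t}(Y_t\mid T-t_k,Y_t)\|^2$ plus the cost of moving from $Y_{t_k}$ to $Y_t$ inside the conditioned score.

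The heart of the argument — and the step I expect to be the main obstacle — is bounding this discretization piece by $\kappa + d\kappa^2(K-L)(\log\delta^{-1}+C)$ rather than the $D\kappa^2 K$ one would get from a naive Jacobian/Hessian bound. Here I would invoke the key structural observation highlighted in the introduction: with the first-order-corrected score~\eqref{eqn:first_order}, the quantity $s_{T-t}(Y_t) - s_{T-t}(Y_t\mid T-t_k,Y_{t_k})$ is, along the true reverse process, a martingale increment (this is the content of~\eqref{eqn:reason}). Concretely, since $s_t(x) = \E[-\tfrac{X_t - c_t X_0}{\sigma_t^2}\mid X_t = x] / \text{(appropriate normalization)}$, the corrected score $s_t(\cdot\mid t',x')$ is exactly the conditional expectation of the same integrand given the coarser information $\sigma(X_{t'})$; hence the difference is a martingale difference and its second moment telescopes cleanly. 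This lets me replace the worst-case operator-norm bound on $\nabla s$ (which scales with $D$) by the trace/variance of the relevant conditional covariance, and this is where the intrinsic dimension enters: by the score bounds of \cite{azangulov2024convergencediffusionmodelsmanifold} stated for manifolds, $\E\|s_t(X_t)\|^2$ and the associated conditional variances scale like $d/\sigma_t^2$ (up to $\log$ factors and the constant $C$ from \cref{asmp:3}), not $D/\sigma_t^2$.

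Assembling the pieces, on the uniform part of the schedule ($k<L$, step size $\gamma_k=\kappa$, times $T-t_k \gtrsim 1$) the per-step contribution is $O(\kappa^2)$ and there are $L = O(T/\kappa)$ steps, but because $\sigma_{T-t_k}^2 \asymp 1$ there the sum is $O(\kappa\cdot T)$ which I would fold into the $\kappa$ and $De^{-2T}$ terms after choosing $T \asymp \log(D/\kappa)$; on the exponential part ($k = L+m$, $0\le m\le K-L$) the step size in the relevant time variable is $\gamma_k/\sigma_{T-t_k}^2 \asymp \kappa$ again, each of the $K-L$ steps contributes $\lesssim d\kappa^2(\log\delta^{-1}+C)$ because $\log \sigma_{T-t_k}^{-2}$ ranges over an interval of length $O(\log\delta^{-1})$, and summing gives the stated $d\kappa^2(K-L)(\log\delta^{-1}+C)$. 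The remaining technical care is in controlling the discrepancy between evaluating the conditioned score at $Y_t$ versus $Y_{t_k}$ — i.e. the term (ii) above's "movement" part — which I would handle by a short Grönwall/Itô estimate on $\E\|Y_t - Y_{t_k}\|^2$ over an interval of length $\gamma_k \le \kappa\min(1,T-t_k)$, again using the $d$-dependent second-moment bounds on the score so that this does not reintroduce a factor of $D$. Collecting the initialization term, the score term, and the two schedule contributions yields~\eqref{eq:main_result}.
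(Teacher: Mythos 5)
Your overall architecture matches the paper's: Girsanov decomposition into initialization ($De^{-2T}$), score error ($\varepsilon_{score}^2$), and a discretization term; then the observation (via Tweedie) that the corrected-score discrepancy $s_{T-t}(X_{T-t}\mid T-t_k,X_{T-t_k})-s_{T-t}(X_{T-t})$ equals $\frac{c_{T-t}}{\sigma^2_{T-t}}\bigl(m_{T-t}(X_{T-t})-m_{T-t_k}(X_{T-t_k})\bigr)$, a martingale increment of $\E[X_0\mid X_t]$; then the $d$-dependent conditional-variance bound from the manifold assumptions. One simplification you are missing: there is no separate ``movement from $Y_{t_k}$ to $Y_t$'' term to control by Gr\"onwall --- the Girsanov integrand is already exactly the martingale increment above, and the integral over $[t_k,t_{k+1}]$ is bounded by its value at the right endpoint times $\gamma_k$ via a monotonicity argument (both $c_t^2/\sigma_t^4$ and the increment's second moment are monotone in the right direction).

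The genuine gap is in the final summation, where your quantitative claims are asserted rather than derived, and as stated they do not give the theorem's rate. (a) On the uniform part $k<L$ you claim $O(\kappa^2)$ per step and accept a total of $O(\kappa T)$; but there is no justification that a single martingale increment over a step of length $\kappa$ has second moment $O(\kappa)$, and even granting it, $\kappa T$ is weaker than the stated $\kappa$ and cannot be ``folded into $De^{-2T}$.'' The paper instead uses that $c^2_{T-t}/\sigma^4_{T-t}\le 4$ uniformly for $t\le t_L$, so the increments telescope across all $L$ steps:
\begin{align*}
\sum_{k<L}\kappa\,\tfrac{c^2_{T-t_{k+1}}}{\sigma^4_{T-t_{k+1}}}\bigl(g(T-t_k)-g(T-t_{k+1})\bigr)\le 4\kappa\, g(T)\le 4\kappa,
\qquad g(t):=\E\|X_0-m_t(X_t)\|^2 .
\end{align*}
(b) On the exponential part, bounding each increment $g(T-t_k)-g(T-t_{k+1})$ crudely by $g(T-t_k)\lesssim d(T-t_k)(\log\delta^{-1}+C)$ together with $\gamma_k/\sigma^4_{T-t_{k+1}}\lesssim \kappa/(T-t_{k+1})$ yields only $\kappa$ per step, hence a total $\asymp d(\log\delta^{-1}+C)\log\delta^{-1}$ that does not decay with $K$. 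The extra factor of $\kappa$ requires an Abel summation (summation by parts) that again exploits the telescoping structure, producing $8\kappa\,g(1)+16\kappa^2\sum_{k>L}(T-t_k)^{-1}g(T-t_k)$, after which the bound $g(t)\lesssim dt(\log\delta^{-1}+C)$ gives exactly $d\kappa^2(K-L)(\log\delta^{-1}+C)$. Your appeal to ``$\log\sigma^{-2}$ ranging over an interval of length $\log\delta^{-1}$'' does not supply this second factor of $\kappa$. So the key idea is present, but the two telescoping/summation-by-parts steps --- which are precisely what convert the martingale structure into the claimed linear-in-$d$, order-$\kappa$ and order-$\kappa^2$ rates --- are missing from your argument.
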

    
    This bound consists of four terms: (i) $\varepsilon^2_{score}$ corresponding to the error in score approximation; (ii) $De^{-2T}$ corresponding to the initialization error; (iii) $\kappa$ corresponding to the discretization error for $t \in[0, T-1]$; (iv) $d\kappa^2(K-L)(\log \delta^{-1}+C)$ corresponding to the discretization error for $t \in[T-1, T-\delta]$. By choosing $K$ to be sufficiently large, we obtain a linear (up to logarithmic factors) in $d$ bound on the iteration complexity in the following.
    \begin{corollary}
        Under the same assumptions as in Theorem~\ref{thm:main}, for a given $\kappa < 1/4$ and tolerance $\varepsilon > 0$, choosing $L \simeq \kappa^{-1}\left(\log D + \log \varepsilon^{-1}\right)$ and $K-L \simeq \kappa^{-1}\log \delta^{-1}$, diffusion models with $K$ denoising steps achieve an error bounded as
        \[
        \KL(\hat{Y}_{T-\delta}\| X_{\delta}) \lesssim  \varepsilon^2 + \varepsilon^2_{score} + \frac{d(\log \delta^{-1}+\log \varepsilon^{-1} + \log D)(\log \delta^{-1}+C)\log\delta^{-1}}{K}.
        \]
    \end{corollary}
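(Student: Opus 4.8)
The plan is to take the bound \eqref{eq:main_result} from \cref{thm:main} as a black box and substitute the prescribed choices of $L$ and $K-L$ to turn each of its four terms into the advertised form. Recall from \cref{remark:discretization} that with integers $L, K$ we set $T := \kappa L + 1$ and $\delta := (1+\kappa)^{L-K}$, so that $\log \delta^{-1} = (K-L)\log(1+\kappa)$, and hence $K - L = \log\delta^{-1}/\log(1+\kappa)$. Since $\kappa < 1/4$, we have $\log(1+\kappa) \asymp \kappa$, so $K - L \asymp \kappa^{-1}\log\delta^{-1}$, which is exactly the choice imposed in the statement; this is the one consistency check that needs to be made explicit. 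Similarly $T = \kappa L + 1$, so $e^{-2T} = e^{-2}e^{-2\kappa L}$, and the choice $L \asymp \kappa^{-1}(\log D + \log \varepsilon^{-1})$ gives $\kappa L \asymp \log D + \log\varepsilon^{-1}$, whence $D e^{-2T} \lesssim D \cdot e^{-2\kappa L} \lesssim D \cdot (D\varepsilon^{-1})^{-2} \le \varepsilon^2$ (absorbing the constant $e^{-2}$ and the implied constants into $\lesssim$). This handles terms (i) and (ii): the score term $\varepsilon_{score}^2$ passes through unchanged, and the initialization term $De^{-2T}$ is bounded by $\varepsilon^2$.

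Next I would handle the two discretization terms. For term (iv), $d\kappa^2(K-L)(\log\delta^{-1}+C)$: substitute $K - L \asymp \kappa^{-1}\log\delta^{-1}$ to get $d\kappa^2 \cdot \kappa^{-1}\log\delta^{-1} \cdot (\log\delta^{-1}+C) = d\kappa(\log\delta^{-1})(\log\delta^{-1}+C)$. Now I want to re-express the single factor of $\kappa$ in terms of $K$. From $K = L + (K-L)$ with $L \asymp \kappa^{-1}(\log D + \log\varepsilon^{-1})$ and $K - L \asymp \kappa^{-1}\log\delta^{-1}$, we get $K \asymp \kappa^{-1}(\log D + \log\varepsilon^{-1} + \log\delta^{-1})$, i.e. $\kappa \asymp (\log D + \log\varepsilon^{-1} + \log\delta^{-1})/K$. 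Plugging this in, term (iv) becomes
\[
d(\log\delta^{-1}+\log\varepsilon^{-1}+\log D)(\log\delta^{-1})(\log\delta^{-1}+C)/K,
\]
which is precisely the displayed fraction in the corollary. For term (iii), $\kappa$ alone, the same substitution gives $\kappa \asymp (\log\delta^{-1}+\log\varepsilon^{-1}+\log D)/K$; since $d \ge 1$ and $\log\delta^{-1}, \log\delta^{-1}+C \ge 1$ (recall the remark after \cref{asmp:3} that $\log\delta^{-1} \ge C$, and $C > 0$), this is dominated by term (iv), so it is absorbed into the fraction. Collecting everything yields $\KL(\hat Y_{T-\delta}\|X_\delta) \lesssim \varepsilon^2 + \varepsilon_{score}^2 + d(\log\delta^{-1}+\log\varepsilon^{-1}+\log D)(\log\delta^{-1}+C)\log\delta^{-1}/K$, as claimed.

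The only genuinely delicate point is bookkeeping around the $\asymp$/$\lesssim$ symbols: the choices of $L$ and $K-L$ are themselves only specified up to constants, so one must check that rounding to integers and the freedom in the implied constants do not break the argument. This is routine — one picks $L := \lceil c_1\kappa^{-1}(\log D+\log\varepsilon^{-1})\rceil$ and $K - L := \lceil c_2 \kappa^{-1}\log\delta^{-1}\rceil$ for absolute constants $c_1, c_2$ chosen large enough that the initialization bound $De^{-2T}\le \varepsilon^2$ holds, and notes that the ceiling adds at most $1$ to each, changing $K$ by a bounded factor and hence $\kappa^{-1}$ vs. $K$ by a bounded factor. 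I do not expect any analytic obstacle here; the content of the corollary is entirely in \cref{thm:main}, and this is the substitution that converts the four-term bound into an iteration-complexity statement linear in $d$.
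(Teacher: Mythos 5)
Your proposal is correct and follows exactly the route the paper intends: the corollary is obtained by substituting the schedule of \cref{remark:discretization} (so $K-L\asymp\kappa^{-1}\log\delta^{-1}$, $T=\kappa L+1$) into the four-term bound of \cref{thm:main}, bounding $De^{-2T}\lesssim\varepsilon^2$, rewriting $\kappa\asymp(\log D+\log\varepsilon^{-1}+\log\delta^{-1})/K$, and absorbing the standalone $\kappa$ term into the fraction. Your consistency checks (that the prescribed $K-L$ matches $\delta=(1+\kappa)^{L-K}$, and that the implied constants/integer rounding are harmless) are exactly the bookkeeping the paper leaves implicit.
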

    \paragraph{Tightness of linear bound.} In the best case, a~diffusion model learns the score function exactly and is initialized at the true initial distribution.
    Let $\pi$ be a compactly supported distribution on $\R$ such that a diffusion model with a given discretization scheme and perfect score/initialization achieves an $\bar \varepsilon$ error in $\KL$. 
    
    Consider the product measure $\pi^{\otimes d}\otimes \left(\delta_{0}\right)^{\otimes (D-d)}$ on $\R^d\times \R^{D-d}$. Note that its score function is
    $(s_t(x_1),\ldots, s_t(x_d), -\sigma^{-2}_t x_{d+1}, \ldots, -\sigma^{-2}_t x_{D})$. So, by the tensorization of $\KL$, assuming perfect score estimation and initialization for $\pi^{\otimes d}\otimes \left(\delta_{0}\right)^{\otimes (D-d)}$, the same discretization scheme as above will result in a $d \bar \varepsilon$ error. Therefore a linear dependence in $d$ is optimal. 

    \section{Proof of Theorem~\ref{thm:main}} \label{section:proof}
    We defer the proofs of lemmas to Appendix~\ref{apdx:proofs}. The main idea of the proof of Theorem~\ref{thm:main} is to exploit the martingale structure of $\E[X_0|X_t]$, which we now formalize. We introduce the functions $m_t(x) := \E[X_0 | X_t=x]$ and define a stochastic process $\curly{m_{t}(X_{t})}_{t\in [0,T]}$. Note that $m_t(X_t) = \E[X_0|X_t]$. The next result states that this process is a martingale. 
     \begin{restatable}{lemma}{ReverseMartingale}
    Define the filtration $\mathcal{F}_t := \sigma(X_{s} : s \ge t)$. The process $\curly{m_t(X_t)}_{t \in [0,T]}$ is a martingale w.r.t. $\{\mathcal{F}_t\}_{t \in [0,T]}$. In particular, for $t < t'$
    \begin{align}
        \E[m_t(X_t)|X_{t'}] = m_{t'}(X_{t'}).
    \end{align}
    \end{restatable}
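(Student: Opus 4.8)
The plan is to show that $m_t(X_t) = \E[X_0 \mid X_t]$ together with the reversed filtration $\mathcal{F}_t = \sigma(X_s : s \ge t)$ forms a martingale, which amounts to verifying the tower property in the form $\E[m_t(X_t) \mid \mathcal{F}_{t'}] = m_{t'}(X_{t'})$ for $t < t'$. Two facts drive the argument. First, since $\{X_t\}$ solves the OU SDE, it is a (time-inhomogeneous) Markov process, and this Markov property is preserved under time reversal: the reversed filtration $\mathcal{F}_{t'}$ carries exactly the same information about $X_t$ (for $t<t'$) as $X_{t'}$ alone. Second, $X_0$ and $X_{t'}$ are conditionally independent given $X_t$ whenever $t < t'$, because $0 < t < t'$ and $\{X_s\}$ is Markov forward in time, so the path on $[0,t]$ (which determines $X_0$) and the path on $[t,T]$ (which determines $X_{t'}$ and all of $\mathcal{F}_{t'}$) are conditionally independent given $X_t$.

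The concrete steps are as follows. First I would record that $\{m_t(X_t)\}$ is integrable and $\mathcal{F}_t$-adapted: integrability is immediate since $\mu$ is compactly supported (Assumption~\ref{asmp:2}), so $\|X_0\| \le 1$ and hence $\|m_t(X_t)\| \le 1$; adaptedness holds because $m_t(X_t)$ is a function of $X_t$, which is $\mathcal{F}_t$-measurable. Next, fix $t < t'$ and compute $\E[m_t(X_t) \mid \mathcal{F}_{t'}]$. By the reversed Markov property, $\E[m_t(X_t) \mid \mathcal{F}_{t'}] = \E[m_t(X_t) \mid X_{t'}]$, so it suffices to show $\E[m_t(X_t) \mid X_{t'}] = m_{t'}(X_{t'})$. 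Now expand $m_t(X_t) = \E[X_0 \mid X_t]$ and use the conditional independence of $X_0$ and $X_{t'}$ given $X_t$: this yields $\E[X_0 \mid X_t] = \E[X_0 \mid X_t, X_{t'}]$, and then the tower property over the $\sigma$-algebra generated by $(X_t, X_{t'})$ gives
\begin{align*}
\E[m_t(X_t) \mid X_{t'}] = \E\bigl[\E[X_0 \mid X_t, X_{t'}] \,\big|\, X_{t'}\bigr] = \E[X_0 \mid X_{t'}] = m_{t'}(X_{t'}),
\end{align*}
which is the claim.

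The main obstacle is the careful justification of the two Markov-type facts, i.e.\ that conditioning on the whole reversed filtration $\mathcal{F}_{t'} = \sigma(X_s : s \ge t')$ is the same as conditioning on $X_{t'}$ alone when evaluating a function of $X_t$ with $t<t'$, and the conditional independence $X_0 \perp X_{t'} \mid X_t$. Both follow from the forward Markov property of the OU diffusion: for $0 \le t < t'$, the pre-$t$ $\sigma$-algebra and the post-$t$ $\sigma$-algebra are conditionally independent given $X_t$, which upgrades to the statement that $(X_s)_{s \le t}$ and $(X_s)_{s \ge t'}$ are conditionally independent given $X_t$. One should note this requires only that the OU semigroup is well-defined and $\mu$ has a finite first moment, both guaranteed here. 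An alternative, entirely computational route avoiding the measure-theoretic Markov reversal is to use the explicit Gaussian transition structure: $X_{t'} = c_{t'-t} X_t + \sigma_{t'-t} Z$ for $Z \sim \mathcal{N}(0,\Id_D)$ independent of $X_t$ (hence of $X_0$), so $X_0 \perp X_{t'} \mid X_t$ is transparent, and the reversed-filtration reduction likewise follows from this additive-noise representation. I would present the clean conditional-independence argument and relegate the standard Markov-reversal justification to a sentence citing the forward Markov property.
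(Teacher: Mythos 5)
Your proof is correct and follows essentially the same route as the paper: both arguments reduce to the observation that, by the Markov property of the forward OU process, $m_t(X_t)=\E[X_0\mid X_t]=\E[X_0\mid\mathcal{F}_t]$, after which the martingale identity is just the tower property for the decreasing filtration. The paper packages this as ``Doob martingale plus Markov property,'' while you unwind the same tower/conditional-independence steps explicitly; no gap.
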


    \noindent By Assumption~\ref{asmp:1}, $X_0$ is a.s.\ bounded. Therefore the process $\curly{m_t(X_t)}_{t \in [0,T]}$ is square integrable. So, the following lemma on the orthogonality of martingale increments can be applied.
    \begin{restatable}{lemma}{MartingaleIncrements}
    \label{lemma:orth_increments}
        Let $\{M_t\}_{t \geq 0}$ be a square integrable martingale in $\R^D$ w.r.t.\ a filtration $\{\mathcal{F}_t\}_{t \geq 0}$. Then for any $t_1 < t_2 < t_3$ we have 
        \begin{align*}
            \E \|M_{t_3}-M_{t_{1}}\|^2 = \E \|M_{t_3}-M_{t_{2}}\|^2 + \E \|M_{t_2}-M_{t_{1}}\|^2. 
        \end{align*}
    \end{restatable}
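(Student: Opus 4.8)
The plan is to establish the standard ``Pythagorean'' identity for martingale increments, namely: write the long increment as a sum of two consecutive increments, expand the squared norm, and show that the resulting cross term has zero expectation. Concretely, I would start from the decomposition
\[
M_{t_3} - M_{t_1} = (M_{t_3} - M_{t_2}) + (M_{t_2} - M_{t_1}),
\]
and expand to obtain
\[
\norm{M_{t_3} - M_{t_1}}^2 = \norm{M_{t_3} - M_{t_2}}^2 + \norm{M_{t_2} - M_{t_1}}^2 + 2\innerproduct{M_{t_3} - M_{t_2}}{M_{t_2} - M_{t_1}}.
\]
Since $\curly{M_t}$ is square integrable, Cauchy--Schwarz shows that each term on the right-hand side is integrable, so it remains only to prove $\E\innerproduct{M_{t_3} - M_{t_2}}{M_{t_2} - M_{t_1}} = 0$.

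For the cross term I would condition on $\mathcal{F}_{t_2}$ and use the tower property together with the fact that $M_{t_2} - M_{t_1}$ is $\mathcal{F}_{t_2}$-measurable (so it may be pulled out of the conditional expectation, the product being in $L^1$ by Cauchy--Schwarz):
\[
\E\innerproduct{M_{t_3} - M_{t_2}}{M_{t_2} - M_{t_1}} = \E\square{\innerproduct{\E[M_{t_3} - M_{t_2}\mid\mathcal{F}_{t_2}]}{M_{t_2} - M_{t_1}}}.
\]
The martingale property gives $\E[M_{t_3}\mid\mathcal{F}_{t_2}] = M_{t_2}$, hence $\E[M_{t_3} - M_{t_2}\mid\mathcal{F}_{t_2}] = 0$, so the cross term vanishes. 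Taking expectations in the expansion above then yields the claimed identity.

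There is no genuine obstacle here; the only points needing care are the integrability of the cross term (handled by Cauchy--Schwarz and the $L^2$ hypothesis) and the ``take out what is known'' step. One subtlety worth flagging is that when this lemma is applied to $\curly{m_t(X_t)}_{t\in[0,T]}$, the relevant filtration $\mathcal{F}_t = \sigma(X_s : s\ge t)$ is \emph{decreasing} in $t$; the same computation still goes through after conditioning on the middle time $\mathcal{F}_{t_2}$, except that now $M_{t_3} - M_{t_2}$ is the $\mathcal{F}_{t_2}$-measurable factor and $M_{t_2} - M_{t_1}$ is the one with vanishing conditional mean, so the roles of the two increments are interchanged while the identity is unchanged.
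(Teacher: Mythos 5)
Your proof is correct and rests on the same key mechanism as the paper's: condition on the middle time and use the martingale property to kill the cross term; the paper merely packages this as the identity $\E\|M_{t_j}-M_{t_i}\|^2=\E\|M_{t_j}\|^2-\E\|M_{t_i}\|^2$ applied to each pair of times and then telescoped, which is the same computation in a slightly different order. Your closing remark about the decreasing filtration $\mathcal{F}_t=\sigma(X_s:s\ge t)$ used in the application to $m_t(X_t)$ --- and the resulting interchange of which increment is measurable and which has vanishing conditional mean --- is a correct and worthwhile point that the paper leaves implicit.
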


    \noindent We are now ready to present the proof of Theorem~\ref{thm:main}. Our first steps coincide with \cite{chen2023improvedanalysis, benton2024nearly}. We begin by decoupling the errors coming from score estimation, approximate initialization and SDE discretization. This is formalized by the following lemma.
    \begin{restatable}{lemma}{Decoupling}
    Under the same assumptions as in Theorem~\ref{thm:main}
    \begin{align} \label{eq:chen_bound_of_KL}
        \KL(\hat{Y}_{T-\delta}\| X_\delta) \lesssim \varepsilon^2_{score} +  D e^{-2T} + \sum_{k = 0}^{K-1} \int^{t_{k+1}}_{t_{k}}\expectation \norm{s_{T-t}(X_{T-t}\mid T-t_k, X_{T-t_k}) - s_{T-t}(X_{T-t})}^2dt.
    \end{align}
    \end{restatable}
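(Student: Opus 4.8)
The plan is to run the standard Girsanov-based decoupling argument of \cite{chen2023improvedanalysis,benton2024nearly}, adapted to the corrected-score interpolation \eqref{eq:approx_backward_process_discrete_modified}. Let $Y$ denote the exact reverse diffusion \eqref{eqn:reverse_sde} started from $Y_0\sim p_T$, so that the law of $Y_{T-\delta}$ is exactly that of $X_\delta$; recall that $\hat Y$ is the continuous-time process \eqref{eq:approx_backward_process_discrete_modified}, whose grid values agree with those of \eqref{eq:discrete_version_of_final_SDE}. First I would apply the data processing inequality to the evaluation map $\omega\mapsto\omega(T-\delta)$, so that $\KL(\hat Y_{T-\delta}\|X_\delta)$ is bounded by the KL divergence between the laws of the whole trajectories $\hat Y_{[0,T-\delta]}$ and $Y_{[0,T-\delta]}$ on path space. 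The chain rule for KL then splits this into the initialization term $\KL(\cN(0,\Id_D)\|p_T)$ plus the expectation over the common starting point of the KL divergence between the two conditional path laws.

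The initialization term is $\lesssim De^{-2T}$ by the exponential ergodicity of the OU semigroup: since $\diam M\le1$ the random variable $X_0$ is bounded, so $p_T$ and $\cN(0,\Id_D)$ have comparable Gaussian tails and the expected log-ratio is $O(De^{-2T})$; this computation is routine.

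For the conditional path term I would invoke Girsanov's theorem: the SDEs \eqref{eqn:reverse_sde} and \eqref{eq:approx_backward_process_discrete_modified} share the diffusion coefficient $\sqrt2$ and are driven by a Brownian motion, and on $[t_k,t_{k+1})$ their drifts differ by $2\bigl(s_{T-t}(\cdot)-\hat s_{T-t}(\cdot\mid T-t_k,\cdot)\bigr)$, so the conditional path divergence equals $\sum_{k}\int_{t_k}^{t_{k+1}}\expectation\|\hat s_{T-t}(\cdot\mid T-t_k,\cdot)-s_{T-t}(\cdot)\|^2\,dt$, the expectation being taken along the true forward process. Inserting $\pm\,s_{T-t}(\cdot\mid T-t_k,\cdot)$, the first-order correction \eqref{eqn:first_order} of the \emph{true} score, and using $\|a+b\|^2\le2\|a\|^2+2\|b\|^2$ splits this into (i) a score-approximation contribution and (ii) exactly the discretization integral in \eqref{eq:chen_bound_of_KL}. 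For (i), \eqref{eqn:first_order} gives $\hat s_{T-t}(x\mid T-t_k,x')-s_{T-t}(x\mid T-t_k,x')=c^{-1}_{t-t_k}\tfrac{\sigma^2_{T-t_k}}{\sigma^2_{T-t}}\bigl(\hat s_{T-t_k}(x')-s_{T-t_k}(x')\bigr)$, and the step-size rule $\gamma_k\le\kappa\min(1,T-t_k)$ with $\kappa<1/4$ keeps both $c^{-1}_{t-t_k}=e^{t-t_k}$ and the ratio $\sigma^2_{T-t_k}/\sigma^2_{T-t}$ bounded by absolute constants; integrating $O(1)\cdot\|\hat s_{T-t_k}(X_{T-t_k})-s_{T-t_k}(X_{T-t_k})\|^2$ over the interval of length $\gamma_k$ and summing then gives $O(\varepsilon^2_{score})$ by \cref{asmp:score}. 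Collecting the three contributions yields \eqref{eq:chen_bound_of_KL}.

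The one genuinely delicate point --- the step I would follow essentially verbatim from \cite{chen2023improvedanalysis,benton2024nearly} --- is that Girsanov naturally produces expectations along the \emph{simulated} trajectory $\hat Y_t$, whereas both \cref{asmp:score} and the statement are phrased along the true forward process $X_t$; making these the same (a change of measure against the exact reverse diffusion, using that the associated exponential martingale has unit expectation) is exactly the part of the argument that must be handled with care. Everything else --- the OU ergodicity estimate and the uniform bounds on the coefficients in \eqref{eqn:first_order} under the discretization schedule --- is routine.
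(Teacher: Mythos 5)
Your proposal is correct and follows essentially the same route as the paper: data processing to pass to path measures, a decomposition into the initialization error (bounded by $De^{-2T}$ via OU ergodicity, cf.\ Proposition 5 of \cite{benton2024nearly}) plus a Girsanov drift-difference integral taken along the true reverse process (Proposition 3 of \cite{benton2024nearly}), and then insertion of the corrected \emph{true} score $s_{T-t}(\cdot\mid T-t_k,\cdot)$ with the observation that $\hat s-s$ for the corrected drifts is a uniformly bounded multiple of the raw score error, yielding $O(\varepsilon^2_{score})$ under \cref{asmp:score}. The only difference is cosmetic: the paper outsources this last triangle-inequality step to \cite[Appendix F]{azangulov2024convergencediffusionmodelsmanifold}, whereas you carry out the computation explicitly, and you correctly flag the change-of-measure subtlety that Proposition 3 of \cite{benton2024nearly} is designed to handle.
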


    \noindent Note that we get we get the first and second terms of the desired bound in~\eqref{eq:main_result} plus the discretization error of the ideal score approximation. So, it is sufficient to bound this latter sum by $\kappa + d\kappa^2 (K-L)(\log \delta^{-1}+C)$.

    The next observation is that \eqref{eq:tweedie_formula} combined with \eqref{eqn:first_order} gives
    \begin{align} \label{eqn:reason}
        s_{t}(x|t',x') - s(t,x) = \frac{c_t}{\sigma^2_t}\left(\expectation[X_0|X_t=x] - \expectation[X_0|X_{t'}=x']\right) = \frac{c_t}{\sigma^2_t} \left(m_t(x) - m_{t'}(x')\right).
    \end{align}
    This implies that
    \begin{align}
    \label{eq:tweedi_corollary}
    \expectation \norm{s_{t}(X_t|t',X_{t'}) - s_t(X_t)}^2 = \frac{c^2_t}{\sigma^4_t}\expectation \norm{m_t(X_t) - m_{t'}(X_{t'})}^2.
    \end{align}
    \noindent Applying Lemma~\ref{lemma:orth_increments} with $t_1 =0, t_2 = t, t_3 = t'$ we get
    \begin{align}\label{eqn:helpful_martingale_increment}
        \expectation{\|m_0(X_0) - m_{t'}(X_{t'})\|}^2 = \expectation{\|m_0(X_0) - m_{t}(X_{t})\|}^2 + \expectation{\|m_t(X_t) - m_{t'}(X_{t'})\|}^2.
    \end{align}
    Since $m_0(X_0) = \expectation[X_0|X_0] = X_0$, substituting~\eqref{eqn:helpful_martingale_increment} into~\eqref{eq:tweedi_corollary} gives    \begin{align}\label{eqn:discretization_control}
        \E\norm{s_{t}(X_t|t',X_{t'}) - s_t(X_t)}^2 = \frac{c^2_t}{\sigma^4_t}\left(\expectation{\|X_0 - m_{t'}(X_{t'})\|}^2 
         - \expectation{\|X_0 - m_{t}(X_{t})\|}^2\right).
    \end{align}
    \noindent We next formalize the intuitive statement that the discretization error increases with the time gap.
    \begin{restatable}{lemma}{Monotonicity}\label{lemma:monotonicity} 
        For $t_1 < t_2 < t'$
            \[
         \expectation \norm{s_{t_1}(X_{t_1}|t',X_{t'}) - s_{t_1}(X_{t_1})}^2 \ge  \expectation \norm{s_{t_2}(X_{t_2}|t',X_{t'}) - s_{t_2}(X_{t_2})}^2.
        \]
    \end{restatable}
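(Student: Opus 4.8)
The plan is to read the inequality off identity~\eqref{eq:tweedi_corollary}, which already expresses the quantity of interest as a product
$\E\norm{s_{t}(X_t\mid t',X_{t'})-s_t(X_t)}^2=\dfrac{c_t^2}{\sigma_t^4}\,\E\norm{m_t(X_t)-m_{t'}(X_{t'})}^2$
of two non-negative scalar functions of $t$, and then to show that each of these two factors is non-increasing in $t$ on $[0,t')$. Multiplying two non-negative non-increasing functions gives a non-increasing function, and the lemma follows by evaluating at $t_1<t_2$.

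For the martingale factor I would invoke \cref{lemma:orth_increments} applied to the square-integrable martingale $\curly{m_t(X_t)}$ (square integrability holds since $X_0$ is bounded by \cref{asmp:1}) with the triple $t_1<t_2<t'$: this gives
$\E\norm{m_{t_1}(X_{t_1})-m_{t'}(X_{t'})}^2=\E\norm{m_{t_1}(X_{t_1})-m_{t_2}(X_{t_2})}^2+\E\norm{m_{t_2}(X_{t_2})-m_{t'}(X_{t'})}^2\ge\E\norm{m_{t_2}(X_{t_2})-m_{t'}(X_{t'})}^2$,
so $t\mapsto\E\norm{m_t(X_t)-m_{t'}(X_{t'})}^2$ is non-negative and non-increasing on $[0,t')$ — this is precisely the ``discretization error grows with the time gap'' intuition, now rigorous via orthogonality of increments. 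For the prefactor I would substitute $u:=c_t^2=e^{-2t}\in(0,1)$, so that $c_t^2/\sigma_t^4=u/(1-u)^2$, compute $\tfrac{d}{du}\bigl(u/(1-u)^2\bigr)=(1+u)/(1-u)^3>0$, and conclude that $c_t^2/\sigma_t^4$ is increasing in $u$ and hence — since $u=e^{-2t}$ decreases in $t$ — decreasing in $t$.

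Combining the two monotonicity statements yields $\E\norm{s_{t_1}(X_{t_1}\mid t',X_{t'})-s_{t_1}(X_{t_1})}^2\ge\E\norm{s_{t_2}(X_{t_2}\mid t',X_{t'})-s_{t_2}(X_{t_2})}^2$, as claimed. There is no real obstacle in this argument; the single point worth flagging is that the ``product of monotone functions'' step genuinely requires both factors to have a definite sign, and it is the martingale structure (\cref{lemma:orth_increments}) that supplies this: in the equivalent form~\eqref{eqn:discretization_control} the bracket $\E\norm{X_0-m_{t'}(X_{t'})}^2-\E\norm{X_0-m_t(X_t)}^2$ is not obviously of one sign until one recognizes it, via~\eqref{eqn:helpful_martingale_increment}, as $\E\norm{m_t(X_t)-m_{t'}(X_{t'})}^2$. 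The singularity of $c_t^2/\sigma_t^4$ at $t=0$ is harmless: for $0<t_1<t_2<t'$ all terms are finite, and at $t_1=0$ the left-hand side is at worst $+\infty$, so the inequality holds trivially; in any case, in the application of the lemma inside the proof of \cref{thm:main} the forward times stay bounded below by $\delta>0$.
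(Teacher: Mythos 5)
Your proposal is correct and follows essentially the same route as the paper's own proof: factor the error via~\eqref{eq:tweedi_corollary}, show $c_t^2/\sigma_t^4$ is decreasing by a derivative computation, and show the martingale factor is decreasing via \cref{lemma:orth_increments}. The only cosmetic difference is the substitution $u=e^{-2t}$ for the prefactor's monotonicity, which the paper handles by direct differentiation in $t$.
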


    \noindent We can now use Lemma~\ref{lemma:monotonicity} and then~\eqref{eqn:discretization_control} to bound the sum in~\eqref{eq:chen_bound_of_KL} as
     \begin{align}       
         &\sum_{k = 0}^{K-1} \int^{t_{k+1}}_{t_{k}}\expectation \norm{s_{T-t}(X_{T-t}\mid T-t_k, X_{T-t_k}) - s_{T-t}(X_{T-t})}^2dt \notag
         \\
         &\qquad \le \sum_{k = 0}^{K-1} (t_{k+1}-t_k)\expectation \norm{s_{T-t_{k+1}}(X_{T-{t_{k+1}}}\mid T-t_k, X_{T-t_k}) - s_{T-t_{k+1}}(X_{T-t_{k+1}})}^2 \notag
            \\
          &\qquad = \sum_{k = 0}^{K-1} (t_{k+1}-t_k)\frac{c^2_{T-t_{k+1}}}{\sigma^4_{T-t_{k+1}}}\left(\expectation{\|X_0 - m_{T-t_k}(X_{T-t_k})\|}^2 
         - \expectation{\|X_0 - m_{T-t_{k+1}}(X_{T-t_{k+1}})\|}^2\right). \label{eq:main_bound_part_1}
    \end{align}
    Next, we will split the sum in~\eqref{eq:main_bound_part_1} into two terms: (i) the sum for $t_k \in [0, T-1]$ and (ii) the sum for $t_k \in [T-1, T-\delta]$. In particular, the first term will sum over indices $k = 0$ to $k = L-1$, and the second term will sum from $k = L$ to $k = K-1$. The first term (i) can be bounded by a telecoping argument as follows. We recall that $L$ was chosen in Definition~\ref{definition:discretization} so that $t_L = T - 1$ and $t_{k+1}-t_k = \kappa$ for $k < L$. Therefore for $k \leq L$, we have that $({c^2_{T-t_{k}}}/{\sigma^4_{T-t_{k}}}) \leq 1 / \sigma^4_1 \leq 4$. So, by telescoping we obtain
    \begin{align} 
        & \sum_{k = 0}^{L-1} (t_{k+1}-t_k)\frac{c^2_{T-t_{k+1}}}{\sigma^4_{T-t_{k+1}}}\left(\expectation{\|X_0 - m_{T-t_k}(X_{T-t_k})\|}^2 - \expectation{\|X_0 - m_{T-t_{k+1}}(X_{T-t_{k+1}})\|}^2\right) \notag \\
        & \qquad \leq 4 \kappa\expectation{\|X_0 - m_{T}(X_{T})\|}^2 \le 4\kappa. \label{eqn:t_small}
    \end{align}
    The last inequality follows from Assumption~\ref{asmp:1} combined with
    \[
    \norm{X_0 - m_t(X_t)} = \norm{\int_M (X_0-y)\mu_{0|t}(dy|X_t)} \le  \sup_{y\in M}{\norm{X_0-y}} \le \diam M \le 1,
    \]
    where $\mu_{0|t}(dy|x)$ is the law of $\left(X_0|X_t=x\right)$. Next, we deal with the the second term (ii) which corresponds to the terms $k = L$ to $k = K-1$. This term is bounded using the exponential partitioning of time gaps $\gamma_k$ in $[T-1, T-\delta]$. By the choices in Definition~\ref{definition:discretization}, we have $T-t_k \le 1$, and so $(t_{k+1}-t_k)/\sigma^4_{T-t_{k+1}} \leq  4(t_{k+1}-t_k)/(T-t_{k+1})^2 \le 8\kappa/(T-t_{k+1})$. Also recalling that $c_t \leq 1$ for all $t$, we obtain
    \begin{align} 
     \lefteqn{\sum_{k = L}^{K-1} (t_{k+1}-t_k)\frac{c^2_{T-t_{k+1}}}{\sigma^4_{T-t_{k+1}}}\left(\expectation{\|X_0 - m_{T-t_k}(X_{T-t_k})\|}^2 
     - \expectation{\|X_0 - m_{T-t_{k+1}}(X_{T-t_{k+1}})\|}^2\right) } \notag
     \\
     &\leq 8\kappa\sum_{k = L}^{K-1} \frac{1}{T-t_{k+1}}\left(\expectation{\|X_0 - m_{T-t_k}(X_{T-t_k})\|}^2 
     - \expectation{\|X_0 - m_{T-t_{k+1}}(X_{T-t_{k+1}})\|}^2\right) \notag
     \\
     &\le
     8\kappa \expectation{\|X_0 - m_{1}(X_{1})\|}^2 + 8\kappa\sum_{k = L+1}^{K-1} \frac{t_{k+1} - t_{k}}{\left(T-t_{k+1}\right)\left(T-t_{k}\right)}\expectation{\|X_0 - m_{T-t_k}(X_{T-t_k})\|}^2 \notag
     \\
     &\le
     8\kappa + 16\kappa^2\sum_{k = L+1}^{K-1} \frac{1}{\left(T-t_{k}\right)}\expectation{\|X_0 - m_{T-t_k}(X_{T-t_k})\|}^2. \label{eqn:t_large}
    \end{align}
    Finally, we bound $\E{\|X_0 - m_{T-t_k}(X_{T-t_k})\|}^2$ via the following lemma.
    \begin{restatable}{lemma}{ManifoldConcentrationBound} \label{lemma:manifold_conc_bound}
    Let $\mu$ satisfy Assumptions~\ref{asmp:1}--\ref{asmp:3}. Fix positive $\delta < 1/4$. Then for any $t > \delta$
    \[
    \expectation{\|X_0 - m_{t}(X_{t})\|}^2 \lesssim \min\square{1, dt\left(\log \delta^{-1} + C\right)}.
    \]
    \end{restatable}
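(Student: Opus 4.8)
The plan is to combine two routine reductions with one genuinely geometric estimate. First, the ``$\min$ with $1$'' is free: exactly as in the telescoping step of the main proof, $\norm{X_0-m_t(X_t)}=\norm{\int_M(X_0-y)\,\mu_{0|t}(dy\mid X_t)}\le \diam M\le 1$, so $\E\norm{X_0-m_t(X_t)}^2\le 1$ unconditionally. It therefore remains to prove $\E\norm{X_0-m_t(X_t)}^2\lesssim dt(\log\delta^{-1}+C)$, and we may assume $dt(\log\delta^{-1}+C)<1$ (otherwise there is nothing to show); in particular $t<t_\star$ for $t_\star\asymp (d(\log\delta^{-1}+C))^{-1}$, so $\sigma^2_t=1-e^{-2t}\asymp t$ and $c_t\in[\tfrac12,1]$, though --- importantly --- $\sigma_t$ need not be small relative to the flatness scale $r$.

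Second, since $m_t(X_t)=\E[X_0\mid X_t]$ is the $L^2$-optimal predictor of $X_0$ from $X_t$, for every measurable $g\colon\R^D\to M$ we have $\E\norm{X_0-m_t(X_t)}^2=\E[\Tr\mathrm{Cov}(X_0\mid X_t)]\le \E\norm{X_0-g(X_t)}^2$, so it suffices to exhibit one good $g$. I would take a local ``denoiser'': fix a maximal $\rho$-separated set $\curly{q_i}\subset M$ with $\rho\asymp r$; by Assumptions~\ref{asmp:1} and~\ref{asmp:3} there are at most $e^{O(C)}\rho^{-d}$ of these, and on each ball $B(q_i,cr)$ the manifold is a graph over the affine tangent plane $q_i+T_{q_i}M$ with second fundamental form $\lesssim r^{-1}$. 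Given $x$, let $q(x)$ be a net point closest to $x/c_t$ and define $g(x)$ to be the point of $M\cap B(q(x),cr)$ nearest to the orthogonal projection of $x/c_t$ onto $q(x)+T_{q(x)}M$, with an innocuous fallback when this is ill-posed. Projecting first onto the tangent plane is what strips away the component of the Gaussian noise normal to $M$ before returning to the manifold.

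Third, the estimate. Write $X_t/c_t=X_0+w$ with $w=(\sigma_t/c_t)Z$, and decompose $w=w_\parallel+w_\perp$ along $T_{X_0}M$. On the ``typical'' event $G$ on which $q(X_t)$ lies within $O(r)$ of $X_0$ --- so that $X_0$ and $X_t/c_t$ lie in a common chart of controlled geometry --- the graph estimate yields $\norm{X_0-g(X_t)}\lesssim \norm{P_{T_{X_0}M}w}+(\text{quadratic in }\norm{w_\parallel})$, hence $\E[\norm{X_0-g(X_t)}^2\ind_G]\lesssim \E\norm{P_{T_{X_0}M}w}^2+\text{l.o.t.}=d\sigma^2_t/c^2_t+\text{l.o.t.}\lesssim dt$, the middle identity because $P_{T_{X_0}M}w\mid X_0$ is Gaussian supported on a $d$-dimensional subspace; note that neither $D$ nor a logarithm appears here. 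The logarithmic factor and the constant $C$ enter only through the complementary event $G^c$, where we bound $\norm{X_0-g(X_t)}\le\diam M\le 1$ and must establish $\rP(G^c)\lesssim dt(\log\delta^{-1}+C)$, using an upper tail bound for $\norm{Z}^2\sim\chi^2_D$, the count $e^{O(C)}\rho^{-d}$ of net points, the range $e^{-C}\le p_0\le e^C$, the bound $\Vol M\le e^C$, and the reach of $M$ being $\gtrsim r$.

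The hard part is exactly this control of $G^c$. In high ambient dimension $\norm{w}\asymp\sqrt{Dt}$ exceeds the flatness scale $r$ as soon as $t\gtrsim r^2/D$, which is the generic situation here, so one cannot argue that $X_t/c_t$ stays within the reach of $M$ and project globally, and $G^c$ is not a negligible event. What must be shown is that even when the normal noise $w_\perp$ is large the posterior $\mu_{0|t}(\cdot\mid X_t)$ does not leak onto distant sheets of $M$: a point $y\in M$ far from $X_0$ can be as close to $X_0+w$ as the local sheet is only if $w_\perp$ aligns with $P_{T_{X_0}M^\perp}(y-X_0)$, an event of probability exponentially small in $\norm{w_\perp}^2/\sigma^2_t$; a union bound over the $e^{O(C)}\rho^{-d}$ net points --- this is where $\Vol M\le e^C$, $r>e^{-C}$ and the intrinsic dimension $d$ rather than $D$ enter --- upgrades this to the claimed bound. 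Tracking the curvature corrections, which are only controlled at scale $\lesssim r$, and the borderline regime $\norm{w}\asymp r$ is the technical heart of the proof.
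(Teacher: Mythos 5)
Your two reductions are fine and one of them is a genuinely nice alternative to what the paper does: you handle the $\min$ with $1$ exactly as the paper does (via $\diam M\le 1$), and instead of the paper's Jensen step $\norm{X_0-m_t(X_t)}^2\le\int_M\norm{X_0-y}^2\mu_{0|t}(dy|X_t)$ you invoke $L^2$-optimality of the conditional mean and compare against an explicit denoiser $g$. Both reductions are valid. The paper then finishes in three lines by citing a posterior concentration theorem from \cite{azangulov2024convergencediffusionmodelsmanifold} (their Theorem~15): with probability $1-\delta$ the posterior $\mu_{0|t}(\cdot|X_t)$ puts mass $1-\eta$ within distance $\asymp(\sigma_t/c_t)\sqrt{d(\log_+(\sigma_t/c_t)^{-1}+C)+\log\delta^{-1}\eta^{-1}}$ of $X_0$; choosing $\delta=\eta=\min(1,(\sigma_t/c_t)^2)$ gives the claim. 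That imported theorem is exactly the statement that the posterior does not leak onto distant sheets of $M$.

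The gap in your proposal is that this is precisely the step you do not prove. You correctly identify that controlling $\rP(G^c)$ is ``the technical heart,'' but the argument you sketch for it --- a union bound over the $e^{O(C)}\rho^{-d}$ net points, each contributing probability $\exp(-c\norm{y-X_0}^2/(\sigma_t/c_t)^2)$ --- does not close in the regime you yourself flag as generic. For $y$ at distance $\asymp\rho\asymp r$ from $X_0$, the per-point failure probability is $\exp(-cr^2/t)$, so the union bound requires $r^2/t\gtrsim Cd+\log(\ldots)$. But the lemma must hold for all $t$ with $dt(\log\delta^{-1}+C)<1$, and since Assumption~\ref{asmp:3} only gives $r>e^{-C}$, there is a genuine window $e^{-2C}/(Cd)\ll t\ll (d(\log\delta^{-1}+C))^{-1}$ in which $r^2/t\ll Cd$ and the union bound over the net fails by an exponential margin. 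In that window the noise scale exceeds the flatness scale, the posterior is not confined to a single chart, and the event $G$ (nearest net point within $O(r)$ of $X_0$) simply does not have probability $1-O(dt(\log\delta^{-1}+C))$. Closing this requires a multi-scale or volume-comparison argument of the kind carried out in the cited Theorem~15 --- comparing the posterior mass of distant regions of $M$ against the mass of a small cap around $X_0$ using the density bounds $e^{-C}\le p_0\le e^{C}$ and $\Vol M\le e^{C}$ --- rather than a pointwise alignment bound per net point. As written, the proposal asserts the key inequality rather than establishing it.
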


    \noindent Combining~\eqref{eqn:t_small}~and~\eqref{eqn:t_large} with Lemma~\ref{lemma:manifold_conc_bound}, we obtain the following desired inequality
    \[
    \sum_{k = 0}^{K-1} \int^{t_{k+1}}_{t_{k}}\expectation \norm{s_{T-t}(X_{T-t}\mid T-t_k, X_{T-t_k}) - s_{T-t}(X_{T-t})}^2dt \lesssim \kappa + d\kappa^2(K-L)\left(\log \delta^{-1} + C\right).
    \]

    \begin{remark} \label{remark:benton_proof}
    We return to explaining the nature of first-order correction in \eqref{eqn:first_order}. In the proof, we control the discretization error of the backwards SDE by controlling the difference in drifts $s_t(x|t',x') - s_t(x)$, see \eqref{eq:chen_bound_of_KL}. By construction, the differences are proportional to martingale increments $m_t(x)-m_{t'}(x)$ as in \eqref{eqn:reason}. This allows us to use the orthogonality of martingale increments after that. \\

\cite{benton2024nearly} also leverage martingale properties of the score function. They note that $M_{t} := e^{-(T-t)} s_{T-t}(X_{T-t})$ is a martingale. However, since they consider a standard exponential integrator scheme, the discretization error which is given by the difference in drifts $s_{T-t}(X_{T-t})-s_{T-t_k}(X_{T-t_k})$ is a linear combination of a martingale increment $\left(M_t-M_{t_k}\right)$ and the score term $s_{T-t}(X_{T-t})$. The last term scales as the norm of a $D$-dimensional Gaussian noise vector. We avoid this problem by adjusting the discretization coefficients to kill the second term. So, the difference is only a scaled martingale increment. This enables bounds that are independent of $D$.   
    \end{remark}
\section{Conclusion \& Future Work} \label{section:conclusion}
    In this work, we studied the iteration complexity of diffusion models under the manifold hypothesis. Assuming that the data is supported on a $d$-dimensional manifold, we proved the first linear in $d$ iteration complexity bound w.r.t.\ $\KL$ divergence. Furthermore, we showed that this dependence is optimal. 

    This result is equivalent to an $O(d / K)$ convergence rate in $\KL$ where $K$ is the number of discretization steps. A simple application of Pinkser's inequality gives an $O(\sqrt{d / K})$ bound for the total variation ($\TV$) distance as well. 
\section*{Acknowledgements}
    GD was supported by the Engineering and Physical Sciences Research
    Council [grant number EP/Y018273/1].
    IA was supported by the Engineering and Physical Sciences Research Council [grant number
    EP/T517811/1].  
    PP is supported by the EPSRC CDT in Modern Statistics and Statistical Machine Learning (EP/S023151/1)
    
\bibliographystyle{plainnat}
\bibliography{references}

\appendix


\section{Elements of Manifold Learning}
\label{apdx:0}
    We follow \cite[Section 2.2]{azangulov2024convergencediffusionmodelsmanifold} in defining the class of regular smooth manifolds of interest. We give only the bare minimum details required to describe it. For a more comprehensive discussion, see \cite{divol2022measure}. 
    
    We recall that a $d$-dimensional manifold~\cite{lee2013introduction} is a topological space $M$ that is locally isomorphic to an open subset of $\R^d$. In other words, for each $y\in M$, there is an open set $y\in U_y\subseteq M$ and a continuous function $\Phi_y:U_y\rightarrow \R^d$ such that $\Phi_y$ is a homeomorphism onto its image. The smoothness of the manifold is defined as the smoothness of the functions $\Phi_y$. 

    When a manifold $M$ is embedded into $\R^D$, a key quantity~\cite{federer} used to control the regularity of the embedding is called the \emph{reach} $\tau = \tau(M)$ and is defined as 
    \begin{align*}
            \tau &:= \sup\curly{\varepsilon :  \forall x\in M^\varepsilon,\,\,\,\, \exists!\, y\in M \, \text{ s.t. } \dist(x,M) =\|x-y\|},
    \end{align*}
    where $M^\varepsilon = \curly{x\in \R^D: \dist(x,M) < \varepsilon}$ is the $\varepsilon$-neighborhood of $M$.
    Equivalently, $\tau$ is the supremum over the radii of neighborhoods of $M$ for which the projection is unique. The reach controls~\cite{divol2022measure} both the global and local properties of the manifold.

    In particular, the reach controls the scale at which $M$ admits a natural smooth parameterization $\Phi_y$ which will be used to control the smoothness of the manifold. More precisely, for a point $y\in M$, let $\pi_y:= \pi_{T_yM}$ be the orthogonal projection onto the tangent space $T_yM\simeq \R^d$ at $y$. Then~\cite{aamari2018stability} the restriction of $\pi_y$ to $M\cap B_{\R^D}(y,\tau/4)$ is one-to-one and $B_{T_yM}(0,\tau/8) \subseteq \pi_y (M\cap B_D(y,\tau/4))$. Defining $\Phi_y$ as the inverse of $\pi_y\big|_{M\cap B_{\R^D}(y,\tau/4)}$, we have constructed a local parameterization $\Phi_y: B_{T_yM}(0,\tau/8) \rightarrow M$ of the manifold $M$ at the point $y$.   

    We assume that the $\Phi_y$ are in $C^2(B_{T_yM}(0,\tau/8))$, and define $L := L(M) := \sup_y \norm{\Phi_y}_{C^2(B_{T_yM}(0,\tau/8))}$. From a geometric perspective, this allows us to compare tangent vectors at different points by applying parallel transport which is defined in terms of a second-order differential operator. Finally, we define $r$ in Assumption~\ref{asmp:3} as $r:= \min\left(\tau,L\right)/8$.
\section{Discretization Coefficients}  
\label{apdx:discretization}
    \begin{figure}
        \centering
        \includegraphics[width=0.9\textwidth]{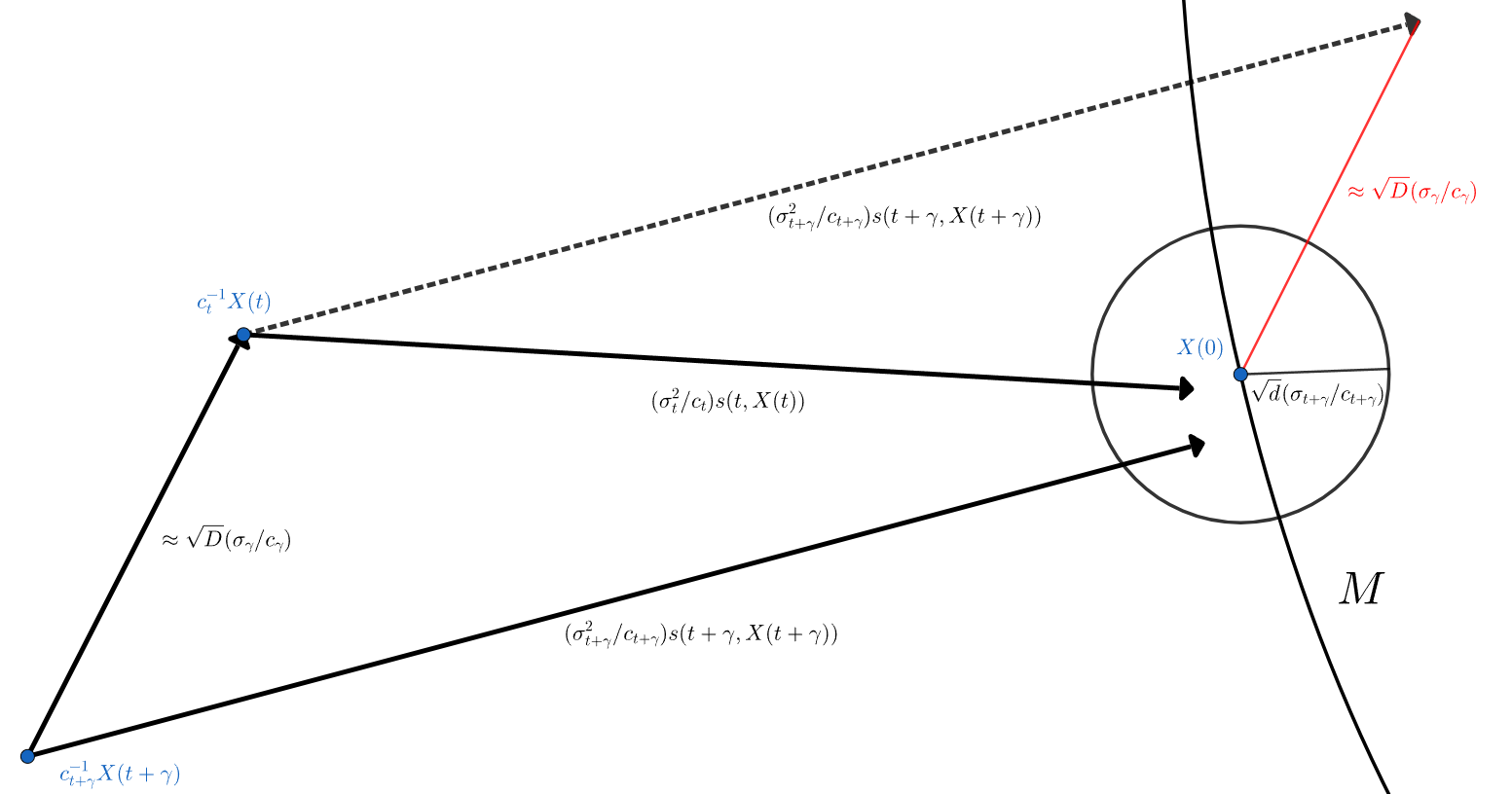}
        \caption{Illustration of a typical error of classic Discretization scheme. While point $c^{-1}_tX_t + c^{-1}_t\sigma_t s_t(X_t)$ is $O\left(\sqrt{d}\sigma_t/c_t\right)$ close to $X_0$, the difference $X_{t+\gamma}-c_\gamma X_t\sim \cN\left(0,\sigma_\gamma\Id_D\right)$, which is of order $O\left(\sigma_\gamma\sqrt{D}\right)$. So, for $\gamma/t \lesssim d/D$ the error scales as at least $\sigma_\gamma\sqrt{D}$. }
        \label{fig:score_matching_error}
    \end{figure}
    In Appendix~\ref{apdx:failure_of_exponential_integrator} show that $\KL$ error of exponential integrator~\eqref{eqn:reverse_sde_approx} scales at least linearly with $D$ and identify the term responsible for this. In Appendix~\ref{apdx:modified_discretization_scheme} we show that this term can be addressed by a simple linear correction and that the resulting scheme is equivalent to~\eqref{eq:discrete_version_of_final_SDE}. 
    \subsection{Failure of Exponential Integrator}
    \label{apdx:failure_of_exponential_integrator}
    Assuming the best case scenario, i.e. access to the true score $\hat{s} = s$ and absence of initialization error $\hat{Y}_0 = Y_0$ by Girsanov theorem~\cite{chen2023samplingeasylearningscore} $\KL$-divergence can be estimated 
    \begin{equation}    
    \label{eq:exponential_integrator_KL}
    \KL\left(\hat{Y}_{T-\delta}, X_{\delta}\right) \le \sum_{k = 0}^{K-1} \int^{t_{k+1}}_{t_{k}}\expectation \norm{s_{T-t}(X_{T-t}\mid T-t_k, X_{T-t_k}) - s_{T-t}(X_{T-t})}^2dt.
    \end{equation}
    
    In the following, we present a simple, informal, argument showing the right-hand side scales with $D$. To achieve this it is enough to compare $s_{t+\gamma}(X_{t+\gamma})$ with $s_{t}(X_{t})$ and show that the difference between these vectors grows scales $\sqrt{D}$. The following proposition formalizes this claim.
    \begin{proposition}
    \label{prop:growth_of_score_function_difference}
        Assume that $\gamma < t < 1/4$ and $\gamma \gtrsim d/D$ then
        \[
        \E\|s_{t+\gamma}(X_{t+\gamma}) - s_{t}(X_{t})\|^2 \gtrsim D\frac{\gamma}{t+\gamma}.
        \]
    \end{proposition}
    Substituting Proposition~\ref{prop:growth_of_score_function_difference} into~\eqref{eq:exponential_integrator_KL} we obtain either the number of steps or the error scale linearly in $D$. The rest of the section is devoted to the proof of Proposition~\ref{prop:growth_of_score_function_difference}.
    \begin{proof}
        On the one hand, by the properties of the Ornstein-Uhlenbeck process
        \begin{align*}    
        &X(t) = c_{t} X_0 + \sigma_{t} Z_t,
        \nonumber \\
        &X_{t+\gamma} = c_{t+\gamma} X_0 + \sigma_{t+\gamma} Z_{t+\gamma}, 
        \\
        &X_{t+\gamma} = c_{\gamma} X_t + \sigma_\gamma Z_{\gamma}, \nonumber 
        \end{align*}
        where $Z_\gamma, Z_t, Z_{t+\gamma} \sim\cN\left (0,\Id_D\right )$ and $X_0 \sim \mu$. Note, that $\sigma_{t+\gamma} Z_{t+\gamma} = c_\gamma\sigma_{t}Z_{t} + \sigma_{\gamma} Z_{\gamma}$.
        
        On the one hand, by triangular inequality
        \[
        \norm{s_t(X_t) - s_{t+\gamma}(X_{t+\gamma})} \ge 
         \norm{\sigma_t^{-1} Z_t - \sigma^{-1}_{t+\gamma}Z_{t+\gamma}} - \norm{s_t(X_t)-\sigma^{-1}_t X_{t}} - \norm{s_{t+\gamma}(X_{t+\gamma})-\sigma^{-1}_{t+\gamma}Z_{t+\gamma}}.
        \]
        Finally, since $Z_t$ and $Z_\gamma$ are independent
        \[
        \sigma^{-1}_tZ_{t} - \sigma^{-1}_{t+\gamma}Z_{t+\gamma} 
        =
        \sigma^{-1}_tZ_t - \sigma^{-1}_{t+\gamma}\left(c_\gamma\sigma_tZ_t + \sigma_\gamma Z_\gamma \right) 
        \sim 
        \cN\left(0, \left(\left(\sigma^{-1}_t - \sigma^{-1}_{t+\gamma}c_\gamma\sigma_t\right)^2 + \sigma^2_\gamma\sigma^{-2}_{t+\gamma}\right)\Id_D\right).
        \]
        On the other hand, by \cite[Theorem 15]{azangulov2024convergencediffusionmodelsmanifold}, ignoring $\log$-terms, with probability at least $7/8$
        \[
        \norm{s_t(X_t) - \sigma^{-1}_tZ_{t}} \lesssim \sigma^{-1}_t\sqrt{d}.
        \]
        Therefore applying standard bounds on Normal distribution with probability at least $3/4$ holds $\norm{\sigma^{-1}_t Z_t - \sigma^{-1}_{t+\gamma}Z_{t+\gamma}} \gtrsim \sqrt{D}(\sigma_\gamma/\sigma_{t+\gamma})$. So, with probability at least $1/2$
         \[
         \norm{s(t,X(t)) - s(t+\gamma,X(t+\gamma))} \gtrsim \sqrt{D}\frac{\sigma_\gamma}{\sigma_{t+\gamma}} - \sqrt{d}/\sigma_t \gtrsim \sqrt{D\frac{\gamma}{t+\gamma}},
         \] 
         where the last holds by the choice of $t$ and $\gamma$ and since $t/4 < \sigma^2_t < 4t$.  
         By taking the square and then expectation we finish the proof. 
    \end{proof}
    \subsection{Modified Discretization Scheme}    
    \label{apdx:modified_discretization_scheme}
    As we saw in the previous section the source of the term $D$ in the difference $\norm{s_t(X_t)-s_{t+\gamma}(X_{t+\gamma})}$ is a $D$-dimensional normal vector $Z_{\gamma}$. However, it satisfies equation $X_{t+\gamma} = c_\gamma X_\gamma + \sigma_\gamma Z_\gamma$, so it can be expressed as a linear combination of $X_{t+\gamma}$ and $X_{\gamma}$ as 
    \[
    Z_{\gamma} = \frac{X_{t+\gamma}- c_\gamma X_\gamma}{\sigma_\gamma}.
    \]
    Subtracting it in the way that is demonstrated in~Figure~\ref{fig:score_matching_error} we get the representation
    \begin{equation}
    \label{eq:score_approx_in_apdx}
        (\sigma^2_t/c_t) s_t(X_t) = c^{-1}_{t+\gamma}\sigma^2_{t+\gamma}s_{t+\gamma}(X_{t+\gamma}) - (c^{-1}_tX_t - c^{-1}_{t+\gamma} X_{t+\gamma}) + O\left(\sqrt{d}\sigma_t/c_t\right).
    \end{equation}
    
    Dividing by $\sigma^2_t/c_t$ we obtain $s_t(X_t|t+\gamma, X_{t+\gamma})$ that we defined in~\eqref{eqn:first_order} in the right-hand side of~\eqref{eq:score_approx_in_apdx}. 

    Finally, substituting Tweedie's formula~\eqref{eq:tweedie_formula} we get 
    \begin{align*}    
    s_{t}(x_{t}|t',x_{t'}) 
    &=
    c^{-1}_{t'-t}\frac{\sigma^2_{t'}}{\sigma^2_{t}}s_{t'}(x_{t'}) - \frac{x_{t}-c^{-1}_{t'-t} x_{t'}}{\sigma^2_{t}} 
    \\
    &= 
    c^{-1}_{t'-t}\frac{\sigma^2_{t'}}{\sigma^2_{t}}\frac{c_t\expectation[X_0|X_t=x] - x}{\sigma^2_t} - \frac{x_{t}-c^{-1}_{t'-t} x_{t'}}{\sigma^2_{t}} 
    \\
    &= \frac{c_{t}\expectation[X_0|X_{t'}=x_{t'}] - x_{t}}{\sigma^2_{t}}.
    \end{align*}
    Finally, we prove Proposition~\ref{prop:equiv_to_lie}
    \EQUIVALENCETOLIE*    
    We now show that the iteration in~\eqref{eq:discrete_version_of_final_SDE} is precisely the solution of the piecewise-linear SDE~\eqref{eq:approx_backward_process_discrete_modified} integrated over each sub-interval. Fix \(k\) and consider \(t \in [t_k,t_{k+1})\). On this interval, the SDE~\eqref{eq:approx_backward_process_discrete_modified} is a linear SDE in \(\hat{Y}_t\).  In particular, one can rewrite it as 
    \[
    d\hat{Y}_t 
    = 
    \Bigl[\,
    A(t)\,\hat{Y}_t
    \;+\;
    \alpha(t)
    \Bigr]\,dt
    \;+\;
    \sqrt{2}\,dB_t',
    \]
    where
    \[
    A(t) 
    =
    1 - \frac{2}{\sigma_{T-t}^2},
    \qquad
    \alpha(t)
    = 2 c_{t-t_k}^{-1}\frac{\sigma_{T-t_k}^2}{\sigma_{T-t}^2}
    \hat{s}_{T-t_k}(\hat{Y}_{t_k})
    + 2c_{t-t_k}^{-1}\frac{\hat{Y}_{t_k}}{\sigma_{T-t}^2}.
    \]
    This can be solved explicitly via the standard variation-of-constants formula which gives
    \begin{align*}
    \hat{Y}_{t_{k+1}}
    &=\Phi_{t_{k+1}}\left [\hat{Y}_{t_k}
    +
    \int_{t_k}^{t_{k+1}}\Phi_{s}^{-1}\alpha(s)ds +
    \sqrt{2}\int_{t_k}^{t_{k+1}}\Phi_{s}^{-1}dB_s' \right],
    \end{align*}
    where
    \begin{equation*}
        \Phi_s = \exp \left (\int_{t_k}^sA(u)du \right ).
    \end{equation*}
    Using \(c_t=e^{-t}\) and  \(\sigma_t^2 = 1-e^{-2t}\) we can evaluate the above integrals as follows
    \begin{align*}        \Phi_s 
    &= c_{s-t_k}^{-1} \\             \int_{t_k}^{t_{k+1}}\Phi_{s}^{-1}\alpha(s)ds
        &= \sigma_{\gamma_k}^2
        \hat{s}_{T-t_k}\bigl(\hat{Y}_{t_k}\bigr),\\
        \sqrt{2}\int_{t_k}^{t_{k+1}}\Phi_{s}^{-1}dB_s'
        &= c_{\gamma_k}\sigma_{\gamma_k}\,\frac{\sigma_{T - t_{k+1}}}{\sigma_{T - t_k}}\,Z_k,
        \quad
        Z_k \sim \cN\bigl(0,I_D\bigr).
    \end{align*}
    Putting these together yields
    \[
    \hat{Y}_{t_{k+1}}
    \;=\;
    c_{\gamma_k}^{-1}\,\hat{Y}_{t_k}
    \;+\;\frac{\sigma_{\gamma_k}^2}{c_{\gamma_k}}\;\hat{s}_{T-t_k}\bigl(\hat{Y}_{t_k}\bigr)
    \;+\;
    \sigma_{\gamma_k}\,\frac{\sigma_{T - t_{k+1}}}{\sigma_{T - t_k}}\,Z_k,
    \]
    which coincides with~\eqref{eq:discrete_version_of_final_SDE}.
       
\section{Proofs of Lemmas}
    \label{apdx:proofs}
    
    \ReverseMartingale*
    \begin{proof}\label{proof:reverse_martingale}
        Since $X_0 \in L^1$, $\curly{\E[X_0 | \mathcal{F}_t]}_{t \in [0,T]}$ is a Doob martingale. Since $\{X_t\}_{t \in [0,T]}$ is a Markov process, we have $\E[X_0 | \mathcal{F}_t] = \E[X_0 | X_t] = m_t(X_t)$. This completes the proof.
    \end{proof}
    
    \MartingaleIncrements*
    \begin{proof}\label{proof:martingale_increments}
        We follow the proof of \cite[Proposition 3.14]{LeGall} with minimal modifications for the case when $M_t$ takes values in $\R^D$. 
        \begin{align*}
            \E \|M_{t_2}-M_{t_{1}}\|^2
            &= \E \left [\E \left [\|M_{t_2}-M_{t_{1}}\|^2|\mathcal{F}_{t_1}\right]\right]
            = \E\left [\E \left [\|M_{t_2}\|^2 - 2 \langle M_{t_2}, M_{t_1}\rangle + \|M_{t_{1}}\|^2|\mathcal{F}_{t_1}\right ]\right]\\
            &= \E\|M_{t_2}\|^2 - \E\|M_{t_{1}}\|^2.
        \end{align*}
        Applying the same calculation to $\E \|M_{t_3}-M_{t_{1}}\|^2$ and $\E \|M_{t_3}-M_{t_{2}}\|^2$, then summing up gives the desired result.
    \end{proof}

    \Decoupling*
    \begin{proof}\label{proof:Decoupling}    
        We introduce the process $\curly{\hat{Y}_t'}_{t\in [0,T-\delta]}$ that approximates the true backwards process $\curly{Y_t}_{t\in [0,T-\delta]}$ and is given by
        \begin{equation*}    
            \begin{cases}
                d\hat{Y}'_t = \square{\hat{Y}'_t + 2\hat{s}_{T-t}(\hat{Y}_t|T-t_k,\hat{Y}_{t_k})}dt+ \sqrt{2}dB_t,
                 &t\in [t_k,t_{k+1})
                \\
                \hat{Y}'_0 \stackrel{dist.}{=} X_T. &
            \end{cases}
        \end{equation*}
        $\hat{Y}'_t$ follows the same dynamics as the process $\hat{Y}_t$, but it is initialized with the true distribution $\hat{Y}'_{0} \stackrel{dist.}{=} X_T$, not by Gaussian noise $\hat{Y}_0 \sim \cN\left(0,\Id_D\right)$. By~\cite[Section 3.3]{benton2024nearly} and the data-processing inequality   
        \[
        \KL(\hat{Y}_{T-\delta}\|  X_{\delta}) \le \KL(\hat{Y}\|  Y) = \KL(\hat{Y}'\| Y) + \KL\left(Y_0\|\cN\left(0,\Id_D\right)\right), 
        \]
        where $\KL(X\| Y)$ is the $\KL$-divergence between the path measures of processes $X$ and $Y$. We bound the terms separately. Combining Assumption~\ref{asmp:2} with \cite[Proposition 5]{benton2024nearly} we have
        \begin{align}
        \label{eq:lemma13_1}
            \KL\left(Y_0\|\cN\left(0,\Id_D\right)\right) \lesssim \left(D+\E\norm{X_0}^2\right)e^{-2T} \le (D+1)e^{-2T} \lesssim De^{-2T}.
        \end{align}
        At the same time, by~\cite[Proposition 3]{benton2024nearly}, we have a Girsanov-like bound
        \[
        \KL(\hat{Y}'\|  Y) \le \sum_{k = 0}^{K-1} \int^{t_{k+1}}_{t_{k}}\expectation \norm{\hat{s}_{T-t}(X_{T-t}\mid T-t_k, X_{T-t_k}) - s_{T-t}(X_{T-t})}^2 dt.
        \]
        Applying triangular inequality we estimate from above
        \begin{multline}    
        \label{eq:triangular_for_integrals}
        \sum_{k = 0}^{K-1} \int^{t_{k+1}}_{t_{k}}\expectation \norm{\hat{s}_{T-t}(X_{T-t}\mid T-t_k, X_{T-t_k}) - s_{T-t}(X_{T-t})}^2 dt
        \\
        \le
        2\sum_{k = 0}^{K-1} \int^{t_{k+1}}_{t_{k}}\expectation \norm{\hat{s}_{T-t}(X_{T-t}\mid T-t_k, X_{T-t_k}) - s_{T-t}(X_{T-t}\mid T-t_k, X_{T-t_k})}^2 dt 
        \\
        +
        2\sum_{k = 0}^{K-1} \int^{t_{k+1}}_{t_{k}}\expectation \norm{s_{T-t}(X_{T-t}\mid T-t_k, X_{T-t_k}) - s_{T-t}(X_{T-t})}^2 dt. 
        \end{multline}
        To bound the first sum we recall that by the design of discretization scheme $t_{k+1}-t_k = \kappa\min(1,T- t_k)$. Let $t\in [t_k,t_{k+1}]$.
        First, since $t_{k+1}-t_k \le \kappa < 1/4$ we have  $c^{-1}_{t-t_k} \le c^{-1}_{\kappa} =  e^{\kappa} \le  2$. Second, since $(T-t_k)/(T-t) \in [1, 1+\kappa]$ we have $\sigma^2_{T-t_k}/\sigma^2_{T-t} \le 2$. Multiplying, we get 
        $
        \sup_{t\in [t_k, t_{k+1}]} c^{-1}_{t-t_k}\frac{\sigma^2_{T-t_k}}{\sigma^2_{T-t}} \le 4.
        $
        Therefore, applying~\eqref{eqn:first_order} we bound the first sum as
        \begin{multline}   
        \label{eq:bound_on_adjusted_score_integral}
        \sum_{k = 0}^{K-1} \int^{t_{k+1}}_{t_{k}}\expectation \norm{\hat{s}_{T-t}(X_{T-t}\mid T-t_k, X_{T-t_k}) - s_{T-t}(X_{T-t}\mid T-t_k, X_{T-t_k})}^2 dt 
        \\
        =
        \sum_{k = 0}^{K-1} \int^{t_{k+1}}_{t_{k}}\expectation c^{-1}_{t-t_k}\frac{\sigma^2_{T-t_k}}{\sigma^2_{T-t}}\norm{\hat{s}_{T-t_k}(X_{T-t_k}) - s_{T-t_k}(X_{T-t_k})}^2 dt \le 4\varepsilon^2_{score}.
        \end{multline}
        So, substituting~\eqref{eq:bound_on_adjusted_score_integral} into~\eqref{eq:triangular_for_integrals} we get
        \begin{multline}
        \label{eq:lemma13_2}
        \sum_{k = 0}^{K-1} \int^{t_{k+1}}_{t_{k}}\expectation \norm{\hat{s}_{T-t}(X_{T-t}\mid T-t_k, X_{T-t_k}) - s_{T-t}(X_{T-t})}^2dt
        \\
        \le 
        8\varepsilon^2_{score} + 
        2\sum_{k = 0}^{K-1} \int^{t_{k+1}}_{t_{k}}\expectation \norm{s_{T-t}(X_{T-t}\mid T-t_k, X_{T-t_k}) - s_{T-t}(X_{T-t})}^2 dt.
\end{multline}
        Combining~\eqref{eq:lemma13_1} and \eqref{eq:lemma13_2} completes the proof.
    \end{proof}

    \Monotonicity*
    \begin{proof}\label{proof:monotonicity}    
        We recall \eqref{eq:tweedi_corollary} which represents the error as a product of two terms. Since both terms are positive, it is enough to prove that both terms are decreasing. First, a simple calculation shows that for $t \ge 0$
        \[
        \frac{d}{dt}\left(c_t^2/\sigma^4_t\right) = \frac{d}{dt}\frac{e^{-2t}}{(1-e^{-2t})^2} = -\frac{2e^{2t}(e^{2t}+1)}{(e^{2t}-1)^3} <  0.
        \]
        So $c_t^2/\sigma^4_t$ is decreasing and $c^2_{t_1}/\sigma^4_{t_1} \geq c^2_{t_2}/\sigma^4_{t_2}$. Second, by Lemma~\ref{lemma:orth_increments}, the second term is decreasing since
        \begin{align*}
            \E\norm{m_{t_1}(X_{t_1}) - m_{t'}(X_{t'})}^2 &=  \E\norm{m_{t_1}(X_{t_1}) - m_{t_2}(X_{t_2})}^2 + \E\norm{m_{t_2}(X_{t_2}) - m_{t'}(X_{t'})}^2 \\
            &\ge \E\norm{m_{t_2}(X_{t_2}) - m_{t'}(X_{t'})}^2.
        \end{align*}
        Combining these two statements, we get the lemma.
    \end{proof}
    \ManifoldConcentrationBound*
    \begin{proof} \label{proof:manifold_concentration_bound}
        First, we note that $m_t(x) = \int_M y \mu_{0|t}(dy|x)$ where we recall that $\mu_{0|t}(dy|x)$ is the law of $\left(X_0|X_t=x\right)$ and so
        \[
        X_0 - m_{t}(X_{t}) = \int_M \left(X_0-y\right)\mu_{0|t}(dy|X_t).
        \]
        Since $\diam M \le 1$ and $X_0 \in M$, a.s.\ we have that $\|X_0 - m_{t}(X_{t})\| \le 1$. Therefore, it is enough to consider the case $t\le d^{-1} \le 1$.  In this case, $(\sigma_t/c_t)^2 \simeq t$. We also note that
        \[
        \|X_0 - m_{t}(X_{t})\|^2 \le \int_M \norm{X_0-y}^2\mu_{0|t}(dy|X_t).
        \] 
        As shown in \cite[Theorem 15]{azangulov2024convergencediffusionmodelsmanifold}, 
        if 
        \[
        r(t,\delta, \eta) = 2(\sigma_t/c_t)\sqrt{20d\left(\log_+ (\sigma_t/c_t)^{-1} + 4C\right) + 8\log \delta^{-1} + \log \eta^{-1}},
        \]
        then with probability at least $1-\delta$
        \[
        \int_{y\in M: \norm{X_0-y}\le r(t,\delta,\eta)}  \mu_{0|t}(dy|X_t) \ge  1-\eta.
        \]
        Integrating $\|X_0-y\|^2$ w.r.t.\ $\mu_{0|t}(dy|X_t)$ we get that with probability at least~${1-\delta}$
        \[
        \|X_0 - m_{t}(X_{t})\|^2 \le 
        \int_M \norm{X_0-y}^2\mu_{0|t}(dy|X_t) \le r^2(t,\delta,\eta) + \eta.
        \]
        Taking an expectation w.r.t.\ both $X_0$ and $X_t$ we get 
        \[
        \expectation{\|X_0 - m_{t}(X_{t})\|}^2 \le r^2(t,\delta,\eta) + \eta + \delta.
        \]
        Choosing $\delta = \eta = \min\left(1, (\sigma_t/c_t)^2\right)$, we have
        \begin{align*}
            \expectation{\|X_0 - m_{t}(X_{t})\|}^2 
            &\le 4(\sigma_t/c_t)^2\left(20d\left(\log_+ (\sigma_t/c_t)^{-1} + 4C\right) + 18\log_+ (\sigma_t/c_t)^{-1}\right). 
        \end{align*}
        Therefore
        \[
        \expectation{\|X_0 - m_{t}(X_{t})\|}^2 \lesssim (\sigma_t/c_t)^2d\left(\log_+ (\sigma_t/c_t)^{-1} + C\right).
        \]
        As we mentioned, we can limit ourselves to the case $t\le 1$. For $t \in [\delta, 1]$, we have $(\sigma_t/c_t)^2 \simeq t$ and $\log_+(\sigma_t/c_t)^{-1} \lesssim \log \delta^{-1}$. This shows that
         \[
         \expectation{\|X_0 - m_{t}(X_{t})\|}^2 \lesssim td\left(\log\delta^{-1} + C\right).
         \]
    \end{proof}

\end{document}